\documentclass[journal,compsoc]{IEEEtran}
\usepackage{amsfonts}
\usepackage{amssymb}
\newcommand*{\rom}[1]{%
  \textup{\uppercase\expandafter{\romannumeral#1}}%
}
\usepackage{dsfont}
\usepackage{array, makecell}
\usepackage{mathtools}
\usepackage{graphicx}
\usepackage{subfigure}
\usepackage{enumerate}
\usepackage{amsmath}
\usepackage{color}
\usepackage{amsthm}
\usepackage[bottom=1.18in,left=0.65in,right=0.70in,top=0.67in]{geometry}
\newtheorem{theorem}{Theorem}
\newtheorem{proposition}{Proposition}

\newtheorem{lemma}{Lemma}
\usepackage{algorithm}
\usepackage{algpseudocode}
\usepackage{stfloats}
\usepackage{multirow}
\theoremstyle{definition}
\usepackage{color}
\usepackage{url}
\allowdisplaybreaks[4]
\newcommand\reg{\textsc{Regret}}
\usepackage{amsthm}

\ifCLASSOPTIONcompsoc
  \usepackage[nocompress]{cite}
\else
  \usepackage{cite}
\fi

\begin{document}
%


\title{From Novice to Expert: Cost-Aware Bandits for Evolving Worker Performance in Crowdsensing}


\author{Yin Huang ~\IEEEmembership{Student Member,~IEEE}
           ~~Qingsong Liu~\IEEEmembership{Member,~IEEE}
       ~~Jie~Xu~\IEEEmembership{Senior Member,~IEEE}
\IEEEcompsocitemizethanks{\IEEEcompsocthanksitem Y. Huang and J. Xu are with the Department of Electrical and Computer Engineering, University of Florida.
Email: \{yin.huang, jie.xu\}@ufl.edu.

\IEEEcompsocthanksitem Q. Liu is with the Manning College of Information and Computer Sciences, University of Massachusetts Amherst. Email: qingsongliu@umass.edu.
}}

\IEEEtitleabstractindextext{%
\begin{abstract}

Mobile crowdsensing (MC) recruits mobile users to perform sensing tasks using their smartphones, enabling large-scale applications such as traffic monitoring and environmental sensing. A fundamental challenge is online worker recruitment under uncertainty, where the platform must learn workers' sensing performance while operating with a limited budget. Existing learning-based MC recruitment methods typically assume that each worker's sensing quality is stationary with a fixed mean over time. In practice, however, worker performance often improves with experience and eventually stabilizes, while the incurred sensing cost can be unknown in advance due to time-varying device and context states.
In this paper, we study a budget-constrained online recruitment problem in which the platform selects one worker in each round, observes the sensing quality and incurred cost, where the expected sensing quality of each worker increases with experience and eventually converges to a plateau, and repeats until the budget is exhausted. We formulate this problem as a structured bandit model where each worker's expected reward evolves according to an unknown increasing-then-converging function of its participation count, and each worker has an unknown expected cost. We develop a cost-aware online learning framework that jointly learns evolving reward trajectories and heterogeneous costs, detects performance saturation, and allocates the limited budget to maximize long-term sensing utility. We provide theoretical performance guarantees and validate the proposed approach through extensive experiments, demonstrating consistent improvements over baselines that ignore experience-driven dynamics or assume known costs.
\end{abstract}

\begin{IEEEkeywords} Crowdsourcing, Multi-armed Bandits, Worker Selection, Budget Constraints.
\end{IEEEkeywords}}
\maketitle
\IEEEdisplaynontitleabstractindextext
\IEEEpeerreviewmaketitle
\IEEEraisesectionheading{\section{Introduction}}

\IEEEPARstart{M}obile crowdsensing (MC) has emerged as a powerful paradigm for collecting data from widespread locations via everyday mobile devices~\cite{ma2014opportunities}. By leveraging the sensors and connectivity of ubiquitous smartphones, MC enables large-scale sensing applications that would be infeasible for any single user. For instance, MC platforms have been used to monitor city traffic, measure urban noise and pollution levels, and map wireless network coverage. In a typical MC system, a central platform recruits a crowd of smartphone users (workers) to perform sensing tasks, aggregating their contributions to build rich spatiotemporal datasets. The success of such applications hinges on effective worker recruitment, selecting the right participants to maximize data quality under practical constraints like limited budgets and heterogeneous device capabilities.

Despite extensive research on MC recruitment and task allocation algorithms, most prior models assume that each worker’s sensing performance is fixed or stationary over time. In reality, however, participant performance can evolve as they gain experience. Just as crowdworkers on online platforms learn and improve with each task completed, MC participants may become more proficient at sensing tasks through repeated participation. Empirical evidence from crowdsourcing supports this learning effect: for example, the accuracy of Topcoder developers~\cite{wang2017recommending} was observed to increase significantly with the number of tasks completed before eventually stabilizing at a high level. We model this behavior by assuming that each worker’s expected reward follows an \emph{increasing-then-converging} trajectory, a structure that captures both early-stage learning and eventual performance saturation. This suggests a crucial gap in current crowdsensing frameworks: worker improvement dynamics are largely ignored. Failing to account for learning means existing approaches may undervalue novice workers who could become top-performers with more experience, or overcommit to individuals who have already reached their performance plateau.

While prior work has leveraged multi-armed bandit (MAB) frameworks for modeling worker selection in MC systems \cite{chen2022learning,su2025crowdsensing,zhou2024unknown}, these formulations typically assume that each worker’s quality is stationary and represented by a fixed, unknown reward parameter. {In this work, we consider a mobile crowdsensing system with a small, fixed pool of participants in which each task is assigned to one worker, a setting motivated by short-term campaigns built from location-specific microtasks, such as street-issue verification (e.g., potholes or broken street lights), in-store retail audits (e.g., price checks or on-shelf availability checks), and localized environmental measurements (e.g., noise or air-quality readings), where the platform naturally operates over a bounded set of currently available participants and each assignment corresponds to a single report, visit, or measurement \cite{li2020consensus,carrera2013people,maisonneuve2009noisetube,tong2016online,chen2022learning}.} In each round, the platform assigns the sensing task to a worker, observes the obtained sensing quality and the incurred cost, and then updates its recruitment policy for subsequent rounds; the process terminates when the cumulative cost exhausts the total budget. Under this sequential decision process, the platform must balance exploration and exploitation since both workers’ sensing quality and recruitment cost are initially unknown. The budget constraint further amplifies the cost of exploration because every trial consumes limited resources. Moreover, workers’ sensing quality is non-stationary and personalized: a worker may improve with experience during early participation and then gradually stabilize, while different workers can exhibit different learning speeds and plateau levels. Finally, we also consider unknown costs in a realistic manner. In practice, a worker’s effective sensing cost depends on its instantaneous device and context state, such as battery level, network conditions, and surrounding environment; thus, even if the platform adopts a pre-specified cost function, the worker-specific cost parameter is not known a priori and is estimated by the worker in each round based on these factors, making the platform learn the expected cost online.

In this paper, we formulate the online worker selection problem as a structured bandit setting in which each worker’s expected sensing quality evolves according to an unknown, increasing-then-converging function, and expected cost incurs a fixed but unknown cost. The goal is to allocate a finite task budget across workers to maximize the cumulative qualities obtained. This setting requires not only resolving uncertainty and learning heterogeneous costs, but also modeling each worker’s full learning trajectory, including their initial skill level, rate of improvement, and convergence point. To address this, we develop a cost-sensitive extension of the \emph{Time-Increasing Upper Confidence Bound} (TI-UCB) algorithm, called CATI-UCB, which is designed to operate under structured reward dynamics and budget constraints. CATI-UCB combines three core components: it uses a reward-cost ratio to guide exploration and exploitation; it fits an online linear model to estimate each worker’s early learning behavior; and it employs change-point detection to identify when learning saturates. These components work together to adaptively prioritize workers based on their long-term efficiency rather than short-term reward. We prove that CATI-UCB achieves sublinear regret relative to the optimal policy, and we empirically demonstrate that it significantly outperforms existing baselines that ignore temporal reward structure or assume stationarity.

In summary, our main contributions are as follows:
\begin{itemize}
    \item We formulate a new online worker selection problem in crowdsourcing where each worker’s expected reward follows an unknown, increasing-then-converging function, and expected cost incurs a fixed but unknown cost. This setting captures realistic worker learning dynamics and introduces new algorithmic challenges beyond standard stationary bandit models.
    \item We propose CATI-UCB, a structured bandit algorithm that jointly addresses uncertainty, cost-awareness, and non-stationary reward dynamics. The algorithm estimates each worker’s learning curve via online linear regression, detects performance saturation through change-point detection, and selects workers based on an upper confidence bound of the reward-to-cost ratio.
    \item We prove that CATI-UCB achieves logarithmic regret compared with the optimal policy under non-stationary rewards.
    \item We evaluate CATI-UCB through extensive synthetic experiments that model realistic worker learning patterns. The results show that CATI-UCB consistently outperforms baseline methods.
\end{itemize}

\section{Related work}
{\textbf{Mobile Crowdsensing and worker recruitment:} Mobile crowdsensing (MC) studies how to recruit mobile users to perform sensing tasks under uncertainty, and a large body of work has formulated worker recruitment as an online learning problem using multi-armed bandits or related frameworks to handle unknown worker quality, limited budget, and incentive constraints \cite{gao2019unknown,gao2021budgeted,zhao2020differentially,su2025crowdsensing,zhou2024unknown,tang2023btv,ouyang2025mwrs,song2025unknown}. 
Recent studies have further broadened the scope of MC recruitment, for example by addressing insufficient participation through social-network-assisted recruitment~\cite{wang2020socialrecruiter} or by studying dynamic online dispatch under time-varying resource-quality tradeoffs~\cite{chen2022learning}. 
This line of work typically studies sequential server-assigned recruitment under newly observed feedback, which is also the setting considered in this paper.
However, most existing learning-based MC formulations either assume stationary worker quality or focus on other uncertainties such as truth discovery, trust/reputation, requester-side uncertainty, or incentive design \cite{gao2019unknown,gao2021budgeted,tang2023btv,ouyang2025mwrs,song2025unknown}. 
In contrast, our work focuses on a different source of non-stationarity: a worker's expected sensing reward improves with repeated participation and eventually saturates. Motivated by empirical evidence on skill improvement \cite{wang2017recommending}, we model each worker's expected reward as an \emph{increasing-then-converging} function of participation count, while simultaneously learning worker costs online under a finite budget.}

\textbf{Non-stationary Bandits:}
The increasing-then-converging reward trend studied in our work is closely related to the non-stationary bandit literature, which addresses changing reward distributions over time~\cite{huang2023adversarial,huang2025learning}. Existing approaches typically assume either piecewise-stationary~\cite{garivier2011upper, cao2019nearly, besbes2014stochastic} or smoothly-varying~\cite{besbes2014stochastic, russac2019weighted} rewards, and adapt through sliding windows, change-point detection, or discounting techniques. More recently, rested bandits~\cite{tekin2012online,seznec2020single, levine2017rotting} have been proposed, where rewards depend on how often an arm is pulled, capturing trends like skill acquisition. Some works model monotonic or increasing reward patterns~\cite{heidari2016tight, metelli2022stochastic}, but most of these methods assume simplified or deterministic trends and do not explicitly model the increasing-then-converging pattern observed in skill-based tasks. The recent work~\cite{xia2024llm} considers an increasing-then-converging structure similar to ours, but focuses on model selection and does not address budget constraints. In contrast, our work integrates the increasing-then-converging structure with cost-sensitive online learning under a finite budget, enabling more efficient resource allocation.

\textbf{Bandits with Knapsack:}
``Bandits with knapsack'' (BwK) extends the multi-armed bandit problem to settings with limited resources, aiming to maximize total rewards under a budget constraint~\cite{badanidiyuru2012learning,badanidiyuru2018bandits,huang2024adversarial}. Applications include dynamic pricing~\cite{babaioff2015dynamic}, procurement~\cite{singla2013truthful}, and pay-per-click ad allocation~\cite{combes2015bandits}. BwK research can be broadly categorized into stochastic and adversarial settings. In the stochastic case, where each arm follows a fixed but unknown distribution~\cite{badanidiyuru2012learning,agrawal2014bandits,agrawal2016efficient,li2021symmetry}, optimal regret bounds have been achieved by successive elimination~\cite{badanidiyuru2012learning}, UcbBwK~\cite{agrawal2014bandits}, and primal-dual algorithms~\cite{li2021symmetry}. In the adversarial case, rewards can be manipulated by an adversary~\cite{immorlica2022adversarial,rangi2018unifying}. Recent work also explores non-stationary BwK~\cite{liu2022non}. However, these methods generally do not account for the increasing-then-converging patterns observed in skill-based tasks, leading to suboptimal performance. In contrast, our CATI-UCB explicitly models this trend and adaptively detects convergence points under budget constraint, outperforming prior approaches both theoretically and empirically in online worker recruitment.

\section{System Model}\label{sec:problem_inc}
{We consider a mobile crowdsensing (MC) setting in which a requester launches a short-term sensing campaign with a fixed budget, and a platform is responsible for coordinating task assignments among a fixed pool of $K$ committed and currently available mobile workers, as illustrated in Fig.~\ref{fig:main_procedure}. The campaign proceeds in rounds. In each round, the platform selects one worker to execute one sensing job (e.g., capturing traffic photos at an intersection, measuring ambient noise, or collecting WiFi fingerprints), based on the feedback accumulated from previous assignments. After the selected worker completes the sensing job and uploads the sensed data, the platform evaluates the outcome and observes two signals: a reward that reflects the worker's current sensing performance (e.g., data validity, clarity, timeliness, or compliance with task requirements), and the incurred cost. In our primary interpretation, this cost represents the realized execution delay (or task completion time), normalized to $[0,1]$; more generally, it can also absorb other execution-related burdens such as network usage through a scalar cost definition. The platform then updates its internal selection strategy and proceeds to the next round. This interactive process continues until the total budget is exhausted. Under this interpretation, the budget can be viewed as a deadline-like cumulative delay budget, and the platform's objective is to allocate the limited budget across workers over time so as to maximize the cumulative sensing utility before this budget is consumed. We do not explicitly model worker refusals, arrivals, or departures in this paper; instead, we focus on a fixed worker pool so as to isolate the online learning problem induced by evolving worker performance and unknown costs. This setup captures practical scenarios such as sustained urban monitoring, iterative participant onboarding and calibration, and continuous improvement of sensing quality on real-world MC platforms.}

\begin{figure}[ht]
\centerline{\includegraphics[width=75mm]{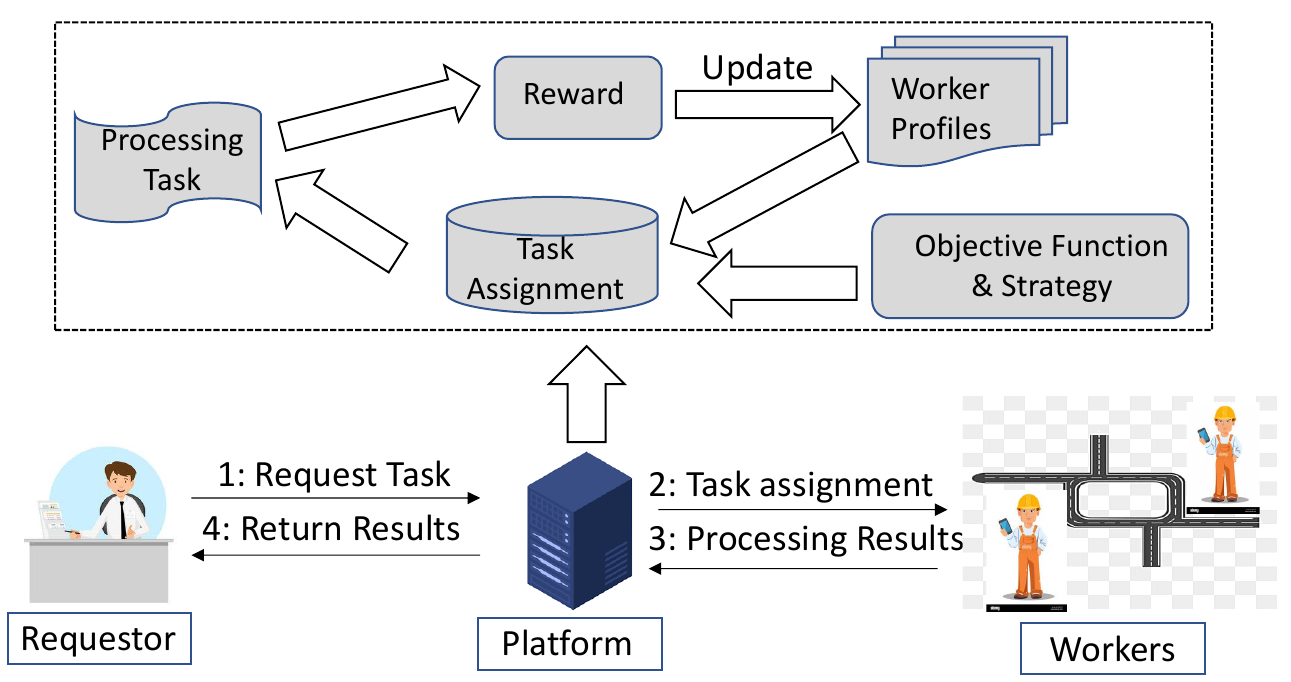}}
\caption{{\it Illustration of the main procedures in the mobile crowdsensing platform}}
\label{fig:main_procedure}
\end{figure}

\textbf{Online Worker Recruitment:}
The online worker recruitment involves a finite set of candidate workers denoted by $\{1,2,\dots,K\}$, where worker $i$ denotes the $i$-th participant among $K$ workers. Time is slotted into rounds indexed by $t=1,2,\dots$. In each round $t$, the platform assigns the sensing job to exactly one worker $A_t \in [K]$ based on the feedback accumulated from previous assignments (i.e., observed sensing quality and incurred cost), and then updates its selection strategy for subsequent rounds.

\textbf{Reward \& Cost Feedback:}
After worker $A_t$ completes the sensing job and uploads the sensed data, the platform evaluates the returned data using a quality assessment pipeline (e.g., completeness, timeliness, and cross-validation with redundant measurements when available), and obtains a normalized quality score $r_{A_t,t}\in[0,1]$ as the reward. Meanwhile, the platform observes the realized cost $Y_{A_t,t}\in[0,1]$ associated with recruiting worker $A_t$, {which primarily represents realized execution delay, and more generally may also absorb other execution-related burdens such as bandwidth consumption through a scalar cost definition.}

We model $(r_{i,t},Y_{i,t})$ as stochastic feedback. Specifically, conditional on worker $i$'s experience level, the reward observations are independent draws from an unknown distribution with a mean that evolves with experience. Let $N_{i,t}=\sum_{s=1}^{t}\mathbf{1}\{A_s=i\}$ denote the number of times worker $i$ has been selected up to the end of round $t$. We write the expected reward as
$\mu_i(t)=\tilde{\mu}_i(N_{i,t})$,
where the unknown function $\tilde{\mu}_i(n)$ is nondecreasing in $n$ and converges to a plateau, capturing the learning-by-doing effect commonly observed in crowd work settings~\cite{wang2017recommending}.

For the cost, we assume that worker $i$ has an unknown, worker-specific expected cost $c_i$, and the observed costs $\{Y_{i,t}\}$ are independent draws from an unknown distribution with mean $c_i$ and support in $[0,1]$. This captures practical MC deployments where the realized sensing cost depends on the worker's instantaneous device/context state (e.g., battery level, network condition, and environment factors). As a result, even if the platform specifies a cost model, the effective cost parameter is not known a priori and must be learned online from observations.

We denote by $\nu_i$ the \emph{convergence point} of worker $i$, representing the number of tasks needed for the worker’s performance to stabilize, and by $\mu_i^*$ the corresponding \emph{expected converged reward}, equal to $\tilde{\mu}_i(\nu_i)$. The value of $\nu_i$ varies across workers, reflecting different learning rates. The shape of the reward trajectory $\tilde{\mu}_i(n)$ may vary in form and can be approximated using different function classes. In our work, we adopt a \emph{piece-wise linear} approximation of $\tilde{\mu}_i(n)$ to preserve the structure of increasing-then-converging reward while enabling efficient analysis and algorithmic design, which is shown as Fig 2. Specifically, we assume that each worker’s reward initially increases linearly with the number of tasks, then stabilizes at $\mu_i^*$ once learning saturates within the budget. This approximation is characterized by three parameters:(1) the initial reward level (reflecting prior ability), $\tilde{\mu}_i(0)$, (2) the convergence point $\nu_i$ (when learning plateaus), and (3) the learning rate (how quickly performance improves) $\frac{\mu_i^* - \tilde{\mu}_i(0)}{\nu_i}$. This model captures key trends in worker learning behavior. Furthermore, our experiments show that the algorithm performs well even when the true reward follows other forms (e.g., \textbf{negative exponential}), outperforming baselines across settings.

\begin{figure}
    \centering
    \includegraphics[width=0.25\textwidth]{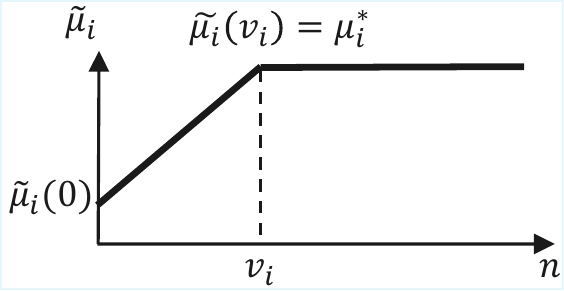}
    \caption{{\it Piece-wise Linear Function}}
    \label{fig:piese-wise}
\end{figure}
\textbf{Budget Constraint:}
After observing the reward cost pairs ${(r_{A_t,t},Y_{A_t,t})}$, the platform then correspondingly updates its selection policy for the next round of worker selection and the total budget is decreased by the cost produced by the selected worker. Denote $B$ as the total budget, then the stopping time ${T_{\mathcal{A},B}}$ for the selecting algorithm $\mathcal{A}$ can be characterized by:  $\sum_{t=1}^{T_{\mathcal{A},B}-1}Y_{A_t,t} \le B < \sum_{t=1}^{T_{\mathcal{A},B}}Y_{A_t,t}.$

\textbf{Objective:}
The goal of the platform is to efficiently select the most suitable worker being selected, i.e., yielding the highest cumulative reward until the budget is exhausted. Specifically, let the total reward accumulated by the selection algorithm $\mathcal{A}$ up to time $T_{\mathcal{A},B}$ be denoted as $R_{\mathcal{A},B} = \sum_{t=1}^{T_{\mathcal{A},B}}r_{A_t,t}$, the platform's goal can be
formulated as follows:
\begin{align}
    &\max R_{\mathcal{A},B} = \sum_{t=1}^{T_{\mathcal{A},B}}r_{A_t,t} \quad s.t. \sum_{t=1}^{T_{\mathcal{A},B}-1}Y_{A_t,t} \le B < \sum_{t=1}^{T_{\mathcal{A},B}}Y_{A_t,t}. \nonumber
\end{align}
Let $\{q_{i,B}\}_{i=1}^K$ be a valid worker-selection allocation under budget $B$ such that $\sum_{i=1}^K c_i q_{i,B} \le B$ where $q_{i,B}$ is the number of times worker $i$ is selected. Then the total reward for this allocation is $R = \sum_{i=1}^K \sum_{t=1}^{q_{i,B}} \mu_i(t)$. Therefore, the optimal expected reward $R^*(B)$ under known reward and cost functions is the solution to the following combinatorial optimization problem:
\begin{equation}
    R^*(B) := \max_{\{q_{i,B} \ge 0\}_{i=1}^K} \left\{ \sum_{i=1}^K \sum_{n=1}^{q_{i,B}} \tilde{\mu}_i(n) \;\middle|\; \sum_{i=1}^K c_i q_{i,B} \le B \right\}.
\end{equation}
This formulation characterizes the best achievable cumulative reward when full knowledge of reward trajectories and costs is available, and switching between workers is allowed. We use expected regret to evaluate the performance of the algorithm $\mathcal{A}$, which is defined below.
\begin{align}
        \reg(\mathcal{A},B) &= R^*(B) -  \mathbb{E}[R_{\mathcal{A},B}] \nonumber
        \\&= R^* - \sum_{i=1}^{K}\mathbb{E}[\sum_{s=1}^{N_{i,T_{\mathcal{A},B}}}\mu_{i,s}],   \label{eq:cum_regret} 
\end{align}
where $N_{i,T_{\mathcal{A},B}}$ is the actual number of times that worker $i$ has been selected by the platform till ending time step $T_{\mathcal{A},B}$, {$\mu_{i,s}$ denotes the expected reward of worker $i$ at its $s$-th selection/pull,} and the $\mathbb{E}$ is taken over the randomness of rewards, costs, and the selecting algorithm.

In our setting, the values of~$\tilde{\mu}_i(n)$, $\nu_i$, $\mu_i^*$, and the cost~$c_i$ of selecting worker~$i$ are unknown and must be learned online. This departs from classical bandit models in crowdsourcing, which assume stationary rewards and represent each worker with a single scalar. Such simplifications overlook the non-stationary, structured reward evolution of real workers, whose performance improves then stabilizes. Capturing this increasing-then-converging behavior poses new challenges: the platform must identify high-performing workers, anticipate their learning, detect convergence, and account for heterogeneous costs under a finite budget. Existing algorithms that assume stationarity or ignore this non-stationary trend are thus inadequate, motivating a structured, cost-sensitive, non-stationary framework.

\textbf{Remark:} We focus on tasks from the same category/type in this work, but our framework can be extended to heterogeneous tasks by incorporating task types/features (e.g., letting reward/cost depend on the current task context). For a finite number of task types, one can maintain separate learning-curve/cost estimates per (worker, type) and apply the same cost-aware UCB selection rule conditioned on the task type. 

Before we dive into the learning algorithm development, let us have a better understanding of the optimal offline policy. 
\begin{lemma}[Asymptotic structure of the offline optimum]
Let
\[
\rho^\star := \max_{i \in [K]} \frac{\mu_i^\star}{c_i},
\qquad
i^\dagger \in \arg\max_{i \in [K]} \frac{\mu_i^\star}{c_i}.
\]
Then there exists a constant $C_0 > 0$, independent of the budget $B$, such that
\[
\rho^\star B - C_0 \le R^\star(B) \le \rho^\star B, \qquad \forall B \ge 0.
\]
In particular, $R^\star(B) = \rho^\star B + O(1).$
Moreover, the single-worker policy that allocates all available budget to worker $i^\dagger$ is optimal up to an additive constant independent of $B$.
\end{lemma}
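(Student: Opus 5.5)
The plan is to sandwich $R^\star(B)$ between a relaxation, for the upper bound, and an explicit single-worker allocation, for the lower bound. We assume throughout that $c_i>0$ for every $i$, which is implicit in the definition of $\rho^\star$.

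\emph{Upper bound.} For every worker, $\tilde{\mu}_i(n)$ is nondecreasing and converges to its plateau $\mu_i^\star=\tilde{\mu}_i(\nu_i)$. Hence $\tilde{\mu}_i(n)\le \mu_i^\star$ for all $n$; under the piecewise-linear model this holds exactly. Take any feasible allocation $\{q_{i,B}\}$ with $\sum_i c_i q_{i,B}\le B$. Then
\[
\sum_{i=1}^K\sum_{n=1}^{q_{i,B}}\tilde{\mu}_i(n)\le \sum_{i=1}^K \mu_i^\star q_{i,B}=\sum_{i=1}^K \frac{\mu_i^\star}{c_i}\,c_i q_{i,B}\le \rho^\star\sum_{i=1}^K c_i q_{i,B}\le \rho^\star B .
\]
Maximizing over feasible allocations gives $R^\star(B)\le\rho^\star B$.

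\emph{Lower bound.} Consider the single-worker allocation $q_{i^\dagger,B}=\lfloor B/c_{i^\dagger}\rfloor$ and $q_{j,B}=0$ for $j\neq i^\dagger$. It is feasible, so $R^\star(B)$ is at least its reward. Write $q=\lfloor B/c_{i^\dagger}\rfloor$, $\nu=\nu_{i^\dagger}$ and $\mu^\star=\mu^\star_{i^\dagger}$. Because $\tilde{\mu}_{i^\dagger}(n)=\mu^\star$ for $n\ge\nu$ and $\tilde{\mu}_{i^\dagger}(n)\ge 0$ (rewards lie in $[0,1]$), the per-pull deficit $\mu^\star-\tilde\mu_{i^\dagger}(n)$ is nonzero only for $n<\nu$ and is at most $\mu^\star$ there. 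Therefore
\[
\sum_{n=1}^{q}\tilde{\mu}_{i^\dagger}(n)\ \ge\ q\mu^\star-\sum_{n=1}^{\nu-1}\bigl(\mu^\star-\tilde{\mu}_{i^\dagger}(n)\bigr)\ \ge\ q\mu^\star-(\nu-1)\mu^\star .
\]
In the piecewise-linear case one can sharpen the subtracted term to roughly $\nu(\mu^\star-\tilde{\mu}_{i^\dagger}(0))/2$, but that is not needed. Next, $q\ge B/c_{i^\dagger}-1$ gives $q\mu^\star\ge\rho^\star B-\mu^\star$. Combining,
\[
R^\star(B)\ \ge\ \rho^\star B-\nu_{i^\dagger}\mu^\star_{i^\dagger},
\]
so we may take $C_0=\nu_{i^\dagger}\mu^\star_{i^\dagger}\le\nu_{i^\dagger}$. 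We can also take the maximum of this quantity over all maximizers $i^\dagger$ so that $C_0$ does not depend on the tie-breaking. This constant depends only on the worker parameters and not on $B$. The same computation shows that the all-to-$i^\dagger$ policy is within $C_0$ of $\rho^\star B\ge R^\star(B)$, which is the final claim of the lemma. Together the two bounds give $R^\star(B)=\rho^\star B+O(1)$.

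\emph{Main obstacles.} There is no deep step here; the care lies in three bookkeeping points. First, the integer rounding $\lfloor B/c_{i^\dagger}\rfloor$ must be absorbed into the constant. Second, small budgets: when $q<\nu$ the displayed deficit bound still holds because the sum simply stops earlier, and when $\rho^\star B-C_0<0$ the lower bound is trivial since $R^\star(B)\ge 0$. Third, the upper bound uses $\tilde{\mu}_i\le\mu_i^\star$, which relies on monotonicity toward the plateau; I would state this explicitly from the model assumptions.
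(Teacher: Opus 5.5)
Your proof is correct and follows the paper's argument: the upper bound relaxes $\tilde{\mu}_i(n)\le\mu_i^\star$ and uses $\mu_i^\star/c_i\le\rho^\star$, and the lower bound evaluates the feasible single-worker allocation of $\lfloor B/c_{i^\dagger}\rfloor$ pulls to $i^\dagger$, absorbing the floor rounding and the pre-plateau deficit into $C_0$. The only difference is bookkeeping: you charge the deficit only to pulls $n<\nu_{i^\dagger}$, which gives $C_0=\nu_{i^\dagger}\mu^\star_{i^\dagger}$ (valid because $\nu_{i^\dagger}\ge 1$ is an integer in the model). This is slightly sharper than the paper's $(1+\nu_{i^\dagger})\mu^\star_{i^\dagger}$, which comes from the cruder bound $\mu^\star_{i^\dagger}(q-\nu_{i^\dagger})_+$.
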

\begin{proof}
Recall that
\[
R^\star(B)
=
\max_{\{q_i \in \mathbb{Z}_{\ge 0}\}_{i=1}^K}
\left\{
\sum_{i=1}^K \sum_{n=1}^{q_i} \tilde{\mu}_i(n)
\,\middle|\,
\sum_{i=1}^K c_i q_i \le B
\right\}.
\]
We first prove the upper bound. For any feasible allocation $\{q_i\}_{i=1}^K$, since
$\tilde{\mu}_i(n) \le \mu_i^\star$ for all $i$ and $n$, we have
\[
\sum_{i=1}^K \sum_{n=1}^{q_i} \tilde{\mu}_i(n)
\le
\sum_{i=1}^K \mu_i^\star q_i
=
\sum_{i=1}^K \frac{\mu_i^\star}{c_i} c_i q_i
\le
\rho^\star \sum_{i=1}^K c_i q_i
\le
\rho^\star B.
\]
Taking the maximum over all feasible allocations yields $R^\star(B) \le \rho^\star B.$
Next, we prove the lower bound by considering the policy that allocates the budget only to worker $i^\dagger$. Let $q^\dagger := \left\lfloor \frac{B}{c_{i^\dagger}} \right\rfloor.$
This is feasible since $c_{i^\dagger} q^\dagger \le B$. Under the piecewise linear increasing-then-converging model, worker $i^\dagger$ reaches the plateau $\mu_{i^\dagger}^\star$ after $\nu_{i^\dagger}$ selections, and hence
\[
\tilde{\mu}_{i^\dagger}(n) = \mu_{i^\dagger}^\star,
\qquad \forall n \ge \nu_{i^\dagger}.
\]
Therefore, we have $\sum_{n=1}^{q^\dagger} \tilde{\mu}_{i^\dagger}(n)
\ge
\mu_{i^\dagger}^\star (q^\dagger - \nu_{i^\dagger})_+.$
Using $q^\dagger \ge \frac{B}{c_{i^\dagger}} - 1$, we obtain
\[
\sum_{n=1}^{q^\dagger} \tilde{\mu}_{i^\dagger}(n)
\ge
\mu_{i^\dagger}^\star
\left(
\frac{B}{c_{i^\dagger}} - 1 - \nu_{i^\dagger}
\right)
=
\rho^\star B - \mu_{i^\dagger}^\star (1 + \nu_{i^\dagger}).
\]
Since this reward is achievable by a feasible policy, it follows that $R^\star(B)
\ge
\rho^\star B - \mu_{i^\dagger}^\star (1 + \nu_{i^\dagger}).$

Thus, by setting $C_0 := \mu_{i^\dagger}^\star (1 + \nu_{i^\dagger}),$
we conclude that
\[
\rho^\star B - C_0 \le R^\star(B) \le \rho^\star B,
\qquad \forall B \ge 0.
\]
Hence,
\[
R^\star(B) = \rho^\star B + O(1).
\]
The last statement follows immediately from the lower-bound construction above.
\end{proof}

\textbf{Remark:}
Lemma~1 provides a uniform finite-budget characterization of the
offline optimum and does not require all workers to reach their
plateaus before the budget is exhausted. The upper
bound only relies on
$\tilde{\mu}_i(n)\le \mu_i^\star$ for all $i$ and $n$.
For the lower-bound construction, let $q^\dagger
=
\left\lfloor \frac{B}{c_{i^\dagger}} \right\rfloor.$
If $q^\dagger < \nu_{i^\dagger}$, then
$(q^\dagger-\nu_{i^\dagger})_+=0$, and hence the lower bound
remains valid, although it may be loose. Thus, no assumption of the
form $\nu_i \ll T_{\mathcal{A},B}$ is required.

The additive constant $C_0$ accounts for the finite reward deficit
incurred during the pre-convergence phase. Consequently, $R^\star(B)=\rho^\star B+O(1),
\qquad
\rho^\star
=
\max_{i\in[K]}\frac{\mu_i^\star}{c_i}.$
Therefore, allocating the budget to any worker $i^\dagger
\in
\arg\max_{i\in[K]}\frac{\mu_i^\star}{c_i}$
is asymptotically optimal up to an additive constant. This
large-budget interpretation is a consequence of Lemma~1 rather than
an additional assumption required by the lemma.

\section{Algorithm Design}\label{sec:algorithm}
In Section \ref{sec:tiucb_alg}, we propose the Cost-aware Time-increasing UCB (CATI-UCB) algorithm for the online worker recruitment problem formulated in Section \ref{sec:problem_inc}. 

\subsection{Overview of the CATI-UCB Algorithm}
To address the challenges of unknown, non-stationary rewards and heterogeneous costs, we propose the CATI-UCB (Algorithm~\ref{alg:TI-UCB}). Unlike standard bandit approaches that model each worker with a fixed reward parameter, our setting explicitly accounts for each worker’s initial skill, learning rate, and convergence point~$\nu_i$, defining the increasing-then-converging reward trajectory~$\tilde{\mu}_i(n)$. Combined with the unknown cost~$c_i$, these attributes determine each worker’s long-term efficiency under budget constraints. CATI-UCB leverages this structure through three components: (i) modeling the increasing reward trend with online piecewise linear regression to predict short-term improvement; (ii) detecting when learning stabilizes via change-point detection; and (iii) selecting workers based on cost-aware upper-confidence estimates of their reward-to-cost ratio. These components interact in a loop to dynamically balance exploration and exploitation until the budget is exhausted, prioritizing long-term over short-term gains.

\subsection{CATI-UCB Algorithm}\label{sec:tiucb_alg}
At the initial phase of the CATI-UCB, each worker is selected once to get the initial estimation value of the reward and cost.

\subsubsection{Increasing Reward Prediction}
To estimate a worker's initial skill and learning rate, CATI-UCB models the early-stage reward trajectory using a simple linear approximation:
\begin{equation}
    \bar{\mu}_{i,N_{i,t}} = \hat{a}_{i,N_{i,t}} \cdot N_{i,t}+\hat{b}_{i,N_{i,t}}\label{eq:least_square}
\end{equation}
where $N_{i,t}$ is the number of times worker $i$ has been selected up to time $t$, and the parameters $\hat{a}_{i,N_{i,t}}$ and $\hat{b}_{i,N_{i,t}}$ are updated via least squares regression over historical reward observations. Here, $\hat{b}_{i,N_{i,t}}$ captures the estimated initial reward (reflecting the worker's prior ability), while $\hat{a}_{i,N_{i,t}}$ estimates the rate of improvement over time (reflecting the worker's learning rate).



This predictive model enables the algorithm to anticipate future rewards based on early performance, allowing more informed exploration decisions. When a worker is detected to have converged (as indicated in line 11 of Algorithm~1), the model fit is reset by setting $N_{i,t} = 1$ and restarting the collection of data points used to estimate $\hat{a}_{i,N_{i,t}}$ and $\hat{b}_{i,N_{i,t}}$. This ensures that the linear model continues to reflect only the pre-convergence trend, maintaining an accurate and adaptive estimate of each worker’s learning dynamics.

\subsubsection{Worker Selection Design}
We rank worker~$i$ by the ratio of the upper confidence bound (UCB) of its estimated reward to the lower confidence bound (LCB) of its estimated cost:~$\frac{\hat{\mu}_{i,t}}{\hat{c}_{i,t}}$, following the Optimism in the Face of Uncertainty (OFU) principle. Although UCB is a biased estimator, it converges asymptotically. At step $t$, the confidence padding term is~$\epsilon_{i,t} = 16\sqrt{\frac{2\ln(1/\delta)}{N_{i,t}}}$, yielding~$\hat{\mu}_{i,t} = \bar{\mu}_{i,t} + \epsilon_{i,t}$ and~$\hat{c}_{i,t} = \bar{c}_{i,t} - \epsilon_{i,t}$. Here, $\bar{\mu}_{i,t}$ is computed via Eq.\eqref{eq:least_square}, and~$\bar{c}_{i,t}$ is the average cost. 

With the prediction of the increasing reward, CATI-UCB seeks to balance exploration and exploitation by adding an uncertainty term to the predicted reward of each worker as Line 5 of Algorithm \ref{alg:TI-UCB}. The algorithm then selects the worker with the maximum value of $\frac{\hat{\mu}_{i,t}}{\hat{c}_{i,t}}$ to play, receives a reward $r_{A_{t},t}$ and cost $Y_{A_t,t}$ and update the observation records of worker $i$ as described in Line 6-8 of Algorithm \ref{alg:TI-UCB}. 

The concentration level of the reward’s uncertainty term follows Eq.~\eqref{UCB bound}. To simplify the notation, we use $\hat{\mu}_{i,N_{i,t}}$ as $\hat{\mu}_i(t)$, and $\bar{\mu}_{i,N_{i,t}}$ as $\bar{\mu}_i(t)$ in the rest of the paper. Then we adopt the following confidence bound:
\begin{equation}\label{UCB bound}
    \hat{\mu}_i(t-1) =
    \left\{
    \begin{aligned}
        &\infty,  &&{ \text{if }\, N_{i,t-1} = 0}\,  \\
        &\bar{\mu}_i(t-1) + 16\sqrt{\frac{2\ln(1/\delta)}{N_{i,t-1}}}, && {\text{otherwise,}}
    \end{aligned}
    \right.
\end{equation}

According to Proposition 1 of ~\cite{xia2024llm}, for any $\delta\in(0, 1)$, $\mu_i(t) \leq \bar{\mu}_i(t) + 16\sqrt{\frac{2\ln(1/\delta)}{N_{i,t}}}$ holds with probability at least $1-\delta$. 

\subsubsection{Change Detection}
To identify when a worker's learning has saturated, CATI-UCB incorporates a change detection module to estimate the convergence point $\nu_i$ for each worker $i$. After a certain number of selections, different workers stabilize at distinct time steps, each reaching a steady reward level~$\mu_i^*$ for~$i \in [K]$. To estimate the convergence point~$\nu_i$ from the observed rewards of each worker~$i$, we track their temporal evolution using two overlapping windows of length~$\omega$. The windows slide as more rewards are collected for worker~$i$.

At time~$t+1$, we compare the average rewards in the two windows: the previous window~$w_1 = [N_{i,t}-2\omega+1, N_{i,t}-\omega]$ and the current window~$w_2 = [N_{i,t}-\omega+1, N_{i,t}]$, denoted by~$\bar{\mu}_{w_1, i}(t+1)$ and~$\bar{\mu}_{w_2, i}(t+1)$. If their difference exceeds~$\gamma/2$ (Line 10, Algorithm~\ref{alg:TI-UCB}), a change point~$\tau_i$ is detected, the observations for worker~$i$ are reset, and the current time step~$\tau_i$ is recorded. Otherwise, the algorithm proceeds to select workers and monitor new rewards. This mechanism adaptively distinguishes learning from stabilization, ensuring that the linear model in the increasing phase remains accurate and up-to-date. The statistical guarantee is given in Proposition~\ref{detect standard}.

\begin{proposition}(Proposition 2 of~\cite{xia2024llm})
\label{detect standard}
The reward change point of worker $i$ is considered to be reached, if $$|\bar{\mu}_{w_1, i}(t+1) - \bar{\mu}_{w_2, i}(t+1)|>{\gamma}/{2}\;,$$
where $\bar{\mu}_{w_1, i}(t+1)$ and $\bar{\mu}_{w_2, i}(t+1)$ are the predicted rewards for worker $i$ at time $t+1$ calculated by observations in the window $w_1 = \left[N_{i,t}-2\omega + 1,N_{i,t}-\omega\right]$, and by observations in the window $w_2 = \left[N_{i,t}-\omega + 1,N_{i,t}\right]$. With $\gamma \leq \sqrt{\frac{2}{\omega} (14+\frac{12}{|\omega -1|})^2 \ln(\frac{2}{\delta})}$, the above change detection inequality is valid with probability $1-\delta$.
\end{proposition}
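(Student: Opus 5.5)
The plan is to reduce Proposition~\ref{detect standard} to a two-sample concentration statement for windowed averages under the piecewise linear reward model.

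\textbf{Step 1: decompose each window average.} For a fixed worker $i$, let $\omega$ be the window length. Each window average satisfies
\[
\bar{\mu}_{w_j,i}(t+1)=m_{w_j}+Z_{w_j},
\]
where $m_{w_j}$ is the average of the true means $\tilde{\mu}_i(n)$ over the window, and $Z_{w_j}$ is an average of independent, zero-mean, $[0,1]$-bounded noise terms. If the windowed quantities are instead produced by the least-squares fit of Eq.~\eqref{eq:least_square}, the same decomposition holds with $Z_{w_j}$ a weighted noise average. The weights have squared norm of order
\[
\frac{1}{\omega}\Bigl(c_1+\frac{c_2}{\omega-1}\Bigr)^2 .
\]
I expect this to be the origin of the factor $(14+\frac{12}{|\omega-1|})^2$ in the threshold.

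\textbf{Step 2: concentration.} Apply Hoeffding's inequality to each weighted sum, or equivalently the linear-regression confidence bound behind Proposition~1 of~\cite{xia2024llm}. With probability at least $1-\delta/2$ each,
\[
|Z_{w_j}|\le \sqrt{\tfrac{1}{2\omega}\bigl(14+\tfrac{12}{|\omega-1|}\bigr)^2\ln(2/\delta)},
\]
so that $|Z_{w_1}-Z_{w_2}|\le \gamma/2$ after a union bound. Here the condition on $\gamma$ is read as calibrating $\gamma$ to the width of this confidence interval.

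\textbf{Step 3: validity in both regimes.} There are two cases.
\begin{itemize}
\item \emph{Both windows lie past $\nu_i$.} Then $m_{w_1}=m_{w_2}=\mu_i^*$, so the statistic is at most $\gamma/2$ on the good event. No false alarm occurs in the plateau.
\item \emph{Windows lie in the increasing phase or straddle $\nu_i$.} Then $m_{w_2}-m_{w_1}$ equals the slope times $\omega$, truncated at the kink. The inequality can exceed $\gamma/2$ only when this mean gap exceeds the noise band. Thus a detection certifies a genuine change in the trajectory, i.e., the linear-phase statistics differ from the plateau, with probability $1-\delta$.
\end{itemize}

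\textbf{Main obstacle.} The difficulty is making ``valid'' precise and matching the exact constants $14$ and $12/(\omega-1)$. These come from bounding the leverage of least-squares predictions evaluated at the window endpoint. My first attempt would be to compute the explicit weights $\frac{1}{\omega}+\frac{(x-\bar{x})(x_k-\bar{x})}{\sum (x_k-\bar{x})^2}$ and bound their $\ell_2$ norm. If that proves messy, I would fall back on citing Proposition~2 of~\cite{xia2024llm} directly, since the increasing-then-converging reward model here is identical to theirs and costs play no role in the detection step.
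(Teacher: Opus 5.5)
The paper gives no argument for this statement. It is imported verbatim as Proposition~2 of~\cite{xia2024llm}, so only your fallback matches the paper: cite the result, since worker $i$'s detection statistic involves only its own rewards, not costs. Your self-contained derivation has a real gap in Step~2. Bounding each window by $a=\sqrt{\frac{1}{2\omega}(14+\frac{12}{|\omega-1|})^2\ln(2/\delta)}$ and taking a union bound gives only $|Z_{w_1}-Z_{w_2}|\le 2a$. But $2a$ is exactly the right-hand side of the hypothesis, so with $\gamma$ at that value you get $|Z_{w_1}-Z_{w_2}|\le\gamma$, not $\le\gamma/2$. (A two-sided bound at level $\delta/2$ also costs $\ln(4/\delta)$, not $\ln(2/\delta)$.)

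What works is a single Hoeffding bound on the combined weighted sum over the two disjoint windows. When extrapolating to $N_{i,t}+1$, the largest least-squares weights are $\frac{10}{\omega}-\frac{6}{\omega(\omega+1)}$ on $w_1$ and $\frac{4}{\omega}$ on $w_2$. So the squared weights sum to at most $C^2/\omega$ with $C=14+\frac{12}{|\omega-1|}$ (even $C=14$ suffices). This gives $\mathbf{P}(|Z_{w_1}-Z_{w_2}|\ge\gamma/2)\le 2\exp\{-\omega\gamma^2/(2C^2)\}$, and that bound is at most $\delta$ exactly when $\gamma\ge\sqrt{\frac{2}{\omega}C^2\ln(2/\delta)}$. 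This is the statement's expression with the inequality reversed. Your ``calibration'' reading is its boundary case. The literal ``$\le$'' cannot give any no-false-alarm guarantee, since letting $\gamma\to 0$ makes the test fire whenever the two noisy predictions differ at all.

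Step~3 conflates raw window averages with least-squares predictions. For the predictions in the statement, two windows lying entirely in the linear phase extrapolate the same line, so their means \emph{coincide}. The gap there is $0$, not slope times $\omega$. Hence the combined bound rules out firing both before $\nu_i$ (the event $F_i$) and on the plateau, for any single test; covering all tests needs a further union bound over their number. Let $s_i=(\mu_i^*-\tilde{\mu}_i(0))/\nu_i$ be the slope. A gap of about $s_i\omega$ arises only when $w_1$ is in the linear phase and $w_2$ is on the plateau. With raw averages, by contrast, the gap is $s_i\omega$ throughout learning, so the test fires before $\nu_i$ as soon as $s_i\omega$ clears the threshold.

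Finally, the paper uses the proposition, in the lemma bounding $\mathbf{P}(F_iD_i)$, as a detection-power claim: the test fires within about $\omega$ pulls of $\nu_i$. That requires a jump condition such as $s_i\omega>\gamma$, so that the mean gap exceeds the threshold $\gamma/2$ plus a noise deviation of at most $\gamma/2$. Neither your argument nor the model supplies this condition, since your bullet only shows that firing requires a nonzero mean gap.
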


\begin{algorithm}[h]
   \caption{CATI-UCB}
   \label{alg:TI-UCB}
   \begin{algorithmic}[1]

   \State \textbf{Input}: $K,\delta, \text{window size}$ $\omega, t=0$,.
   
   \State \textbf{Initialization}: Select each worker $i$ once, observe the $X_{i}$ and $Y_{i}$ to predict the  $\hat{\mu}_{i,t}$ and $\hat{c}_{i,t}$.

   \While{$\sum_{\tau = 1}^t c_{A_{\tau},\tau} \le B$}

   \State $t = t + 1, \epsilon_{i,t} = 16\sqrt{\frac{2\log 1/\delta}{N_{i,t}}}$

   \State \begin{align*}
       &A_t =\arg\max_{i}\{\frac{\hat{\mu}_{i,t} + \epsilon_{i,t}}{\max\{\hat{c}_{i,t}- \epsilon_{i,t},0.001\}}\},
   \end{align*}

   \State \text{Choose worker} $A_t$, and Observe reward $X_{A_t,t}, Y_{A_t,t}$.

   \State Update reward estimation $\hat{\mu}_{A_t,N_{A_t,t}}$, and cost estimation $\hat{c}_{A_t,N_{A_t,t}}$.

   \State $N_{A_t,t} = N_{A_t,t} + 1$.

   \If{$N_{A_t,t} \ge 2w$}

   \If{$|\hat{\mu}_{A_t,w_1}-\hat{\mu}_{A_t,w_2}| \ge \frac{1}{2}\sqrt{\frac{2}{w}{(14 + \frac{12}{|w-1|})}^2\log(2/\delta)}$}

   \State $\tau_{A_t} = t, N_{A_t,t} = 1$

   \EndIf
   \EndIf

   \EndWhile

\end{algorithmic}   
\end{algorithm}

\section{Regret Analysis}
In this section, we provide the regret upper bound of CATI-UCB in a typical increasing-then-converging reward within a given constrained budget setting.

Define the optimal worker as~$\arg\max_{i \in [K]} \frac{\mu_i^*}{c_i}$, assuming all workers have true costs above~$c_{\min}$. The suboptimal gap for worker~$i$ is~$\Delta_i = \frac{\mu_{i^\dagger}^*}{c_{i^\dagger}} - \frac{\mu_{i}^*}{c_{i}}$. Then, by Eq.\ref{eq:cum_regret}, the regret of the increasing-then-converging bandit under budget constraint using algorithm $\mathcal{A}$ can be rewritten as in Lemma~\eqref{lemma 2}.
\begin{lemma}\label{lemma 2}
	The expected regret can be characterized by
	\begin{align*}
		\reg(\mathcal{A},B) \le \sum_{i \neq i^\dagger} c_i \Delta_i \mathbb{E}[N_{i,T_{\mathcal{A},B}}]+\mu^*_{i^\dagger}/c_{i^\dagger} + \sum_{i\in [N]}\nu_i \mu^*_i.
		\end{align*}
\end{lemma}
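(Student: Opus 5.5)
The plan is to start from Lemma~1, which gives $R^\star(B)\le \rho^\star B$ with $\rho^\star=\mu^*_{i^\dagger}/c_{i^\dagger}$. The problem then reduces to lower bounding $\mathbb{E}[R_{\mathcal{A},B}]=\sum_i \mathbb{E}\bigl[\sum_{s=1}^{N_{i,T_{\mathcal{A},B}}}\mu_{i,s}\bigr]$ in terms of the budget actually consumed. For each worker $i$, the piecewise-linear model gives $\mu_{i,s}=\mu_i^*$ for $s\ge\nu_i$ and $\mu_{i,s}\ge 0$ otherwise. Hence
\[
\sum_{s=1}^{N_{i,T}}\mu_{i,s}\ \ge\ \mu_i^*N_{i,T}-\nu_i\mu_i^*,
\]
which produces the term $\sum_{i}\nu_i\mu_i^*$ after summing over workers. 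This uses only $\mu_{i,s}\ge \mu_i^*-\mu_i^*\mathbf 1\{s<\nu_i\}$.

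Next I will relate $B$ to the counts. By the stopping rule, $B<\sum_{t=1}^{T}Y_{A_t,t}$, so $B\le \mathbb{E}\bigl[\sum_{t=1}^{T}Y_{A_t,t}\bigr]$. Since $T$ is a stopping time with respect to the natural filtration and each $Y_{A_t,t}$ has conditional mean $c_{A_t}$ given the past, Wald's identity (optional stopping applied to the martingale $\sum_t (Y_{A_t,t}-c_{A_t})$, with $T$ integrable because $c_i\ge c_{\min}>0$) gives
\[
\mathbb{E}\Bigl[\sum_{t\le T} Y_{A_t,t}\Bigr]=\sum_i c_i\,\mathbb{E}[N_{i,T}].
\]
The overshoot is at most one pull, and I will account for it by the single term $\rho^\star\cdot 1=\mu^*_{i^\dagger}/c_{i^\dagger}$: the last round's cost is at most $1$, so $B\ge \sum_{t\le T-1}Y_{A_t,t}\ge \sum_{t\le T}Y_{A_t,t}-1$. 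Taken with $R^\star\le\rho^\star B$, this is presumably where the extra $\mu^*_{i^\dagger}/c_{i^\dagger}$ comes from. In fact I need the upper direction, $\rho^\star B\le \rho^\star\sum_i c_i\mathbb{E}[N_{i,T}]$, which already follows from $B<\sum_{t\le T} Y$. The $+\rho^\star$ slack is therefore not even needed, though I will keep it for safety in case $T$ is taken as $T-1$ for the reward sum.

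Combining the pieces:
\[
\reg\le \rho^\star\sum_i c_i\mathbb{E}[N_{i,T}]-\sum_i\mu_i^*\mathbb{E}[N_{i,T}]+\sum_i\nu_i\mu_i^*+\rho^\star .
\]
Then, for each $i$, $\rho^\star c_i-\mu_i^*=c_i\bigl(\rho^\star-\mu_i^*/c_i\bigr)=c_i\Delta_i$, and the term for $i^\dagger$ vanishes, which yields the claimed bound.

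The main obstacle is the rigorous handling of the random stopping time. Wald's identity must be justified even though the rewards are non-stationary and the selection is adaptive. Only the costs enter, and they are i.i.d.\ per worker with mean $c_i$, so a per-worker optional stopping argument on $\sum_{s\le N_{i,T}}(Y_{i,(s)}-c_i)$ suffices, provided $\mathbb{E}[T]<\infty$. I will obtain that from $c_i\ge c_{\min}$ by a standard renewal/Hoeffding argument. A secondary point is the reward-side identity $\mathbb{E}[\sum_t r_{A_t,t}]=\mathbb{E}[\sum_i\sum_{s\le N_{i,T}}\mu_{i,s}]$, which is already assumed in Eq.~\eqref{eq:cum_regret}.
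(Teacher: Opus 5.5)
Your proof is correct and follows the same route as the paper. Both bound $R^\star$ by $\rho^\star$ times the budget, dominate the budget by the expected consumed cost, and charge each worker at most $\nu_i\mu_i^*$ for its pre-convergence deficit, arriving at $\sum_i(\rho^\star c_i-\mu_i^*)\,\mathbb{E}[N_{i,T_{\mathcal{A},B}}]+\sum_i\nu_i\mu_i^*$.

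The differences are minor. The paper re-derives $R^\star\le(B+1)\rho^\star$ from the optimal policy's overshooting last round, and that is where its extra $\mu^*_{i^\dagger}/c_{i^\dagger}$ term comes from; your use of Lemma~1's sharper $R^\star(B)\le\rho^\star B$ makes that term slack. The paper also silently replaces realized costs by $c_{A_t}$, a step you justify properly with Wald's identity and optional stopping.
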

\begin{proof}
\renewcommand{\qedsymbol}{}
Note that the stopping time $T_{\mathcal{A},B}$, defined by $\sum_{t=1}^{T_{\mathcal{A},B}-1} Y_{A_t,t} \leq B < \sum_{t=1}^{T_{\mathcal{A},B}} Y_{A_t,t}$, is a random variable depending on $B$. Then the proof of Lemma 2 consists of two steps:

\textbf{Step 1.}
Let $B_t$ denote the remaining budget at round $t$, and $\mathbf{I}(B_t \geq 0)$ the indicator of a valid round. Let $\mathcal{M}$ be the optimal policy that achieves $R^*$, the optimal expected total reward with known reward and cost distributions, and let $A_t^*$ denote its selected action. We first show that $R^* \leq (B+1)\frac{\mu^*_{i^\dagger}}{c_{i^\dagger}}$.
\begin{align*}
R^* &= \mathbb{E}\left[ \sum_{t=1}^{\infty} r_{A^*_t, t} \mathbf{I}(B_t \geq 0) \right]
\le \mathbb{E}\left[\sum_{t=1}^{\infty}\mu_{A_t^*}^*\mathbf{I}(B_t \geq 0)\right]
\\&\le \mathbb{E}\left[\sum_{t=1}^{\infty}c_{A_t^*}\frac{\mu_{i^\dagger}^*}{c_{i^\dagger}}\mathbf{I}(B_t \geq 0)\right]
=\frac{\mu_{i^\dagger}^*}{c_{i^\dagger}}\mathbb{E}\left[\sum_{t=1}^{\infty}c_{A_t^*}\mathbf{I}(B_t \geq 0)\right]
\\&=\frac{\mu_{i^\dagger}^*}{c_{i^\dagger}}\mathbb{E}\left[\sum_{t=1}^{\infty}c_{A_t^*,t}\mathbf{I}(B_t \geq 0)\right]
\leq (B + 1) \frac{\mu_{i^\dagger}^*}{c_{i^\dagger}},
\end{align*}
where the first inequality holds since $\mu_{i,t} \le \mu_{i}^*$ and the second follows from the suboptimal worker definition: $\frac{\mu_{A^*_t}^*}{c_{A^*_t}} \le \frac{\mu_{i^\dagger}^*}{c_{i^\dagger}} $.

\textbf{Step 2.}
According to step 1, the optimal reward can be upper bounded as $R^* \leq (B + 1) \frac{\mu^*_{i^\dagger}}{c_{i^\dagger}} < \mathbb{E} \left[ \sum_{t=1}^{T_{\mathcal{A},B}} c_{A_t, t} + 1 \right] \frac{\mu^*_{i^\dagger}}{c_{i^\dagger}}.$
Accordingly, the regret can be bounded as
\begin{align*}
&\text{Regret}(\mathcal{A},B)\leq \mathbb{E} \left[ \sum_{t=1}^{T_{\mathcal{A},B}} c_{A_t, t} + 1 \right] \frac{\mu^*_{i^\dagger}}{c_{i^\dagger}} 
- \mathbb{E} \left[ \sum_{t=1}^{T_{\mathcal{A},B}} r_{A_t, t} \right]
\\&
= \mathbb{E} \left[ \sum_{t=1}^{T_{\mathcal{A},B}} \left( \frac{\mu^*_{i^\dagger}}{c_{i^\dagger}} c_{A_t, t} - r_{A_t, t} \right) \right]
+ \frac{\mu^*_{i^\dagger}}{c_{i^\dagger}}\\
& \le \mathbb{E} \left[ \sum_{t=1}^{T_{\mathcal{A},B}} \left( \frac{\mu^*_{i^\dagger}}{c_{i^\dagger}} c_{A_t} - \mu^*_{A_t} \right) \right] + \sum_{i\in [N]} \nu_i \mu^*_i 
+ \frac{\mu^*_{i^\dagger}}{c_{i^\dagger}}\\
& = \sum_{i \neq i^\dagger} c_i \Delta_i \mathbb{E}[N_{i,T_{\mathcal{A},B}}] 
+ \frac{\mu^*_{i^\dagger}}{c_{i^\dagger}} +  \sum_{i\in [N]} \nu_i \mu^*_i, 
\end{align*}
where the inequality is due to 
\begin{align*}
\mathbb{E}\left[\sum_{t=1}^{T_{\mathcal{A},B}}r_{A_t,t}\right] &= \mathbb{E}\left[\sum_{i=1}^K\sum_{l=0}^{N_{i,T_{\mathcal{A},B}}}\mu_{i}(l)\right]
    \\&= \mathbb{E}\left[\sum_{i=1}^K\sum_{l=0}^{N_{i,T_{\mathcal{A},B}}}\mu_{i}^*\right] - \nu_i\mu_i^* + \sum_{l=0}^{\nu_i}\mu_i(l)
    \\&\ge \mathbb{E}\left[\sum_{i=1}^K\sum_{l=0}^{N_{i,T_{\mathcal{A},B}}}\mu_{i}^*\right] - \nu_i\mu_i^*. {\qed}
\end{align*} 
\end{proof}

We next only focus on the expected selecting time of each suboptimal worker. 
Define two events that $F_i = \{ \tau_i \ge \nu_i\}$ and $D_i=\{ \tau_i\le \nu_i+\omega\} $. $F_i$ implies that the $i$-th change point can only be detected by the algorithm after the change occurs. Denote $F_i^c$ and $D_i^c$ as the complementary event of $F_i$ and $D_i$. This leads to the following result:
\begin{lemma}
	Let $\gamma = 2\sqrt{\frac{2}{\omega} (14+\frac{12}{|\omega -1|})^2 \ln(B)}$, then $\mathbf{P}(F_iD_i)> 1-\frac{1}{B}$.
\end{lemma}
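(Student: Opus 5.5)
The plan is to bound the two failure modes separately and combine them with a union bound:
\[
\mathbf{P}\big((F_iD_i)^c\big)\le \mathbf{P}(F_i^c)+\mathbf{P}(F_i D_i^c).
\]
Both terms are controlled by the window-based concentration guarantee of Proposition~\ref{detect standard}, with $\delta$ chosen in terms of $B$. Setting $\delta$ so that $\ln(2/\delta)$ matches $\ln B$, up to the factor absorbed by taking $\gamma$ twice the threshold in Proposition~\ref{detect standard}, makes the threshold $\gamma/2$ equal to $\sqrt{\frac{2}{\omega}(14+\frac{12}{|\omega-1|})^2\ln B}$. Each relevant window comparison then errs with probability of order $1/B^{2}$ or smaller.

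\textbf{False alarms (bounding $\mathbf{P}(F_i^c)$).} Suppose $\tau_i<\nu_i$. Then at some comparison time $n<\nu_i$, both windows $w_1,w_2$ lie in the increasing linear phase. There the true means of the two windows differ by at most $\omega\cdot\frac{\mu_i^*-\tilde\mu_i(0)}{\nu_i}$, which I expect is below the detection margin in the regime of interest. The window averages are $\omega$ independent bounded samples, so a Hoeffding bound shows that the event $|\bar\mu_{w_1,i}-\bar\mu_{w_2,i}|>\gamma/2$ has probability at most $2\exp(-\omega\gamma^2/c)$ for an absolute constant $c$. The constants $(14+12/|\omega-1|)^2$ in $\gamma$ are exactly those inherited from the least-squares prediction error in \cite{xia2024llm}, so this exponential is at most $1/B^{2}$. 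A union bound over the at most $\nu_i\le B/c_{\min}$ comparison times before $\nu_i$ then gives $\mathbf{P}(F_i^c)\le 1/(2B)$. If the count of comparisons is too large for this, I would instead use $\ln(B^2)$ inside $\delta$, which the factor $2$ in $\gamma$ leaves room for.

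\textbf{Detection delay (bounding $\mathbf{P}(F_iD_i^c)$).} On $F_i$, consider the comparison made once $w_2$ has first moved entirely past $\nu_i$, which happens at most $\omega$ selections after $\nu_i$. At that point $w_1$ straddles or precedes the change, and $w_2$ sits on the plateau. Following the setup of \cite{xia2024llm}, the change magnitude is assumed to be at least $\gamma$, i.e.\ the gap between the plateau and the pre-change window means is at least $\gamma$. Then missing detection requires the empirical difference to fall below $\gamma/2$ despite a true gap of at least $\gamma$, i.e.\ a deviation of at least $\gamma/2$. By the same concentration this has probability at most $1/(2B)$. Summing the two bounds yields $\mathbf{P}(F_iD_i)>1-1/B$.

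The main obstacle is the delay step. The statement implicitly needs a separation condition (the jump in windowed means across $\nu_i$ exceeds $\gamma$, and the pre-convergence slope times $\omega$ stays below $\gamma/2$). The paper has not stated this explicitly, so I would import it as the standing assumption from \cite{xia2024llm}. The other delicate point is reconciling the union bound over many comparison times with the single-$\delta$ guarantee; I would handle it by absorbing the count into the logarithm via the factor $2$ in $\gamma$.
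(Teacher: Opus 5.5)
Your split into false alarms and detection delay is a different route from the paper's. However, it mixes two different detection statistics, and with either one a step fails. If $\bar\mu_{w_1,i},\bar\mu_{w_2,i}$ are plain window averages, as in your Hoeffding step, the concentration has ample slack, but your two separation conditions contradict each other in this model. Here $\tilde\mu_i$ is continuous and concave (slope $a_i=(\mu_i^*-\tilde\mu_i(0))/\nu_i$ up to $\nu_i$, then flat), so $\tilde\mu_i(n+\omega)-\tilde\mu_i(n)\le a_i\omega$ for every $n$. The expected gap between consecutive window averages is therefore exactly $a_i\omega$ throughout the increasing phase, and it only shrinks once $w_2$ reaches the plateau. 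There is no jump at $\nu_i$, so ``$a_i\omega<\gamma/2$'' and ``gap across $\nu_i$ at least $\gamma$'' cannot both hold. A plain-average detector cannot be tuned to fire at convergence but not before.

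Suppose instead they are the two windows' least-squares predictions of the next reward, which is what Proposition~\ref{detect standard} and the constant $(14+\frac{12}{|\omega-1|})^2$ refer to. Then the geometry is right: both fits predict the same mean during the linear phase, and at the first comparison with $w_2$ entirely on the plateau, the $w_1$ fit overshoots it by $a_i(\omega+1)$ in expectation. But the tail bound behind that constant (inequality (8) of \cite{xia2024llm}) is $\exp\{-\omega s^2/(2(14+\frac{12}{|\omega-1|})^2)\}$ for a deviation $s$. At $s=\gamma/2$ this equals exactly $e^{-\ln B}=1/B$, not $1/B^2$. The factor $2$ in $\gamma$ is already spent on the threshold being $\gamma/2$, so there is no room to pass to $\ln B^2$ without changing the lemma. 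Under your separation assumption, the delay step alone is then only bounded by $1/B$. The union over the roughly $\nu_i-2\omega$ pre-convergence comparisons adds about $(\nu_i-2\omega)/B$, so you get $1-\mathbf{P}(F_iD_i)$ of order $\nu_i/B$, not below $1/B$.

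The paper's proof obtains $1-1/B$ by applying inequality (8) once and substituting $\gamma$. It identifies $\mathbf{P}(F_iD_i)$ with the probability that a single windowed prediction gap exceeds $\gamma/2$, with no union over earlier comparisons and no stated separation condition. So you have three options. You can adopt that one-comparison identification. Otherwise your more careful argument supports only a weaker conclusion, e.g.\ $\mathbf{P}(F_iD_i)\ge 1-O(\nu_i/B)$ under an explicit condition on $a_i(\omega+1)$; this is still enough for term (iii) of the theorem, which becomes $O(\nu_i/c_{\min})$. Or you can prove the stated bound with a larger $\gamma$ that absorbs the number of comparisons into the logarithm.
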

\begin{proof}
    $\mathbf{P}(F_iD_i) = \mathbf{P}(v_i < \tau_1 < v_i + \omega)$, which means that if a change occurs, the algorithm detects it within the window $\omega$. By inequality (8) in~\cite{xia2024llm},
    \begin{align*}
        \mathbf{P}(F_iD_i) &= \mathbf{P}(|\hat{\mu}_{w_1,i}(t+1) - \hat{\mu}_{w_2,i}(t+1)|>\frac{\gamma}{2}) \\&> 1 -\exp\{-\frac{\omega(\frac{\gamma}{2})^2}{2(14+\frac{12}{|\omega-1|})^2} \}\nonumber.
    \end{align*}
    Then substitute $\gamma = 2\sqrt{\frac{2}{\omega} (14+\frac{12}{|\omega -1|})^2 \ln(B)}$ to $1 -\exp\{-\frac{\omega(\frac{\gamma}{2})^2}{2(14+\frac{12}{|\omega-1|})^2} \}$, we have $ \mathbf{P}(F_iD_i) > 1 -\frac{1}{B}$.
\end{proof}

Then, we introduce two notations as follows,
\begin{align*}
	T_0 = \lfloor \frac{2B}{c_{\min}} \rfloor, \quad \quad N_0 = 2\log B\cdot  \frac{1}{{\min}^2\{
		\frac{\Delta_i (c_i)^2}{8(\mu^*_i + c_i)}, \frac{c_i}{4}
		  \}}.
\end{align*}
\begin{lemma}
For any $t\ge T_0$, we have $\mathbf{P}(B_{t+1}\ge 0)\le \exp(-2(B-tc_{\min})^2/t)$.
\end{lemma}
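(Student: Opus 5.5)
The event $\{B_{t+1}\ge 0\}$ says that the budget is not yet exhausted after $t$ rounds. Since $B_{t+1} = B - \sum_{s=1}^{t} Y_{A_s,s}$, this means $\sum_{s=1}^{t} Y_{A_s,s} \le B$. The plan is to compare this cumulative observed cost with its conditional mean and apply a Hoeffding--Azuma bound.

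\textbf{Step 1 (centering).} Let $\mathcal{F}_{s-1}$ be the history up to round $s-1$; the choice $A_s$ is $\mathcal{F}_{s-1}$-measurable. Conditional on $\mathcal{F}_{s-1}$, the cost $Y_{A_s,s}\in[0,1]$ has mean $c_{A_s}\ge c_{\min}$. Hence
\[
Z_s := c_{A_s} - Y_{A_s,s}
\]
is a martingale difference sequence bounded in an interval of length $1$. On the event $\{B_{t+1}\ge 0\}$ we have
\[
\sum_{s=1}^t Z_s \;\ge\; \sum_{s=1}^t c_{A_s} - B \;\ge\; t c_{\min} - B.
\]

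\textbf{Step 2 (sign of the deviation).} For $t\ge T_0 = \lfloor 2B/c_{\min}\rfloor$, we have $t c_{\min} \ge 2B - c_{\min} \ge B$ whenever $B\ge c_{\min}$. The deviation $t c_{\min}-B$ is therefore nonnegative, and we are in a genuine large-deviation regime. (In fact $t c_{\min}-B\gtrsim t c_{\min}/2$, which is what makes the bound decay in $t$ later.)

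\textbf{Step 3 (concentration).} Apply the Azuma--Hoeffding inequality to the bounded martingale differences $Z_s$:
\[
\mathbf{P}\Bigl(\sum_{s=1}^t Z_s \ge t c_{\min}-B\Bigr) \le \exp\bigl(-2(t c_{\min}-B)^2/t\bigr).
\]
Since $(B-tc_{\min})^2=(tc_{\min}-B)^2$, this is exactly the claimed bound.

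\textbf{Main obstacle.} The main subtlety is the dependence structure. Because $A_s$ is chosen adaptively, the costs are not i.i.d.\ with a common mean, so plain Hoeffding does not apply directly. The martingale formulation handles this, but it needs the model assumption that $Y_{A_s,s}$ is drawn independently of the past given $A_s$.

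A secondary point is that the statement counts rounds with or without the initialization phase, and the stopping rule uses $\le B$. These only shift $t$ by a constant and do not affect the argument. If a stopped process is preferred, one can note that $\{B_{t+1}\ge0\}$ implies that all of the first $t$ rounds were actually played, so the sum is well defined.
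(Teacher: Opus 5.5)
Your proof is correct. It rests on the same idea as the paper's: after $t\ge T_0$ rounds the cumulative cost concentrates around a value at least $tc_{\min}>B$. The concentration step, however, is done differently.

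The paper applies the ordinary Hoeffding inequality to $c_1+\cdots+c_t$, centered at the unconditional means $\mathbb{E}[c_1],\dots,\mathbb{E}[c_t]$. This implicitly treats the per-round costs as independent, and only afterwards does it use $\sum_s\mathbb{E}[c_s]\ge tc_{\min}$. You instead center at the conditional means $c_{A_s}$, use the pointwise bound $c_{A_s}\ge c_{\min}$ to get the deterministic threshold $tc_{\min}-B$, and then invoke Azuma--Hoeffding.

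The paper's version is shorter and would be fine for a non-adaptive selection rule. Under an adaptive policy, though, $\sum_s c_{A_s}$ is itself random and can deviate from its mean by order $t$ (e.g.\ if the policy commits to a cheap or an expensive worker depending on early observations). So the paper's intermediate Hoeffding inequality can fail even though the final bound is true. Your martingale argument holds for every policy.

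Two minor remarks. First, the constant $2$ in the exponent requires the predictable-range form of Azuma--Hoeffding. This is available because $Z_s\in[c_{A_s}-1,c_{A_s}]$ with $\mathcal{F}_{s-1}$-measurable endpoints; using only $|Z_s|\le 1$ would give just $\exp(-(tc_{\min}-B)^2/(2t))$. Second, your caveat $B\ge c_{\min}$ in Step 2 is unnecessary, since for $B<c_{\min}$ every $t\ge 1$ already has $tc_{\min}>B$.

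Your handling of the floor in $T_0$, giving $tc_{\min}\ge 2B-c_{\min}$, is also more accurate than the paper's claim $tc_{\min}\ge 2B$.
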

\begin{proof}
Recall that the cost incurred at round $t$ is $c_t$. When $t \ge T_0$ we have $\mathbb{E}[c_1] + \cdots + \mathbb{E}[c_t] \geq t \cdot c_{\min} \geq 2B > B$.
Accordingly, it holds that
\begin{align*}
\mathbf{P}(B_{t+1} \geq 0) &= \mathbf{P}(c_1 + \cdots + c_t \leq B) \\
&\leq \mathbf{P}(c_1 + \cdots + c_t - \mathbb{E}[c_1] - \cdots - \mathbb{E}[c_t] \\&\qquad\qquad\leq B - \mathbf{E}[c_1] - \cdots - \mathbb{E}[c_t])
\\
&\leq \exp \{- \frac{2(B - \mathbb{E}[c_1] - \cdots - \mathbb{E}[c_t])^2}{t} \}
\\ &
\leq \exp \{-\frac{2(B - t c_{\min})^2}{t} \},
\end{align*}
where the second inequality holds due to the Hoeffding inequality.
\end{proof}

Here, denote that algorithm CATI-UCB as $\mathcal{C}$, we are ready to have a regret bound for our algorithm:
\begin{theorem}
    Let $\gamma = 2\sqrt{\frac{2}{\omega} (14+\frac{12}{|\omega -1|})^2 \ln(B)}$, the regret of our algorithm CATI-UCB can be bounded by :
\begin{align*}
    \text{Regret}(\mathcal{C},B)&\le \sum_{i \neq i^\dagger} c_i \Delta_i\cdot  
(N_0 + \frac{4T_0}{B^2} + \mathcal{F}(B,c_{\min}) + 2\omega + \\&\; \nu_i +\frac{\frac{2B}{c_{\min}} + 1 +\mathcal{F}(B,c_{\min})}{B}
) +  \frac{\mu^*_{i^\dagger}}{c_{i^\dagger}} +  \sum_{i\in [N]} \nu_i \mu^*_i,
\end{align*}
where \begin{align}
T_0 &= \lfloor \frac{2B}{c_{\min}} \rfloor,\qquad N_0 = \frac{2\log B}{{\min}^2\left\{
 \frac{\Delta_i (c_i)^2}{8(\mu^*_i + c_i)}, \frac{c_i}{4}
 \right\}},\; \text{and}\nonumber \\
&\mathcal{F}(B,c_{\min}) = \exp\left\{\frac{-2{(B-1)}^2}{\frac{2B}{c_{\min}}+1}\cdot\frac{1}{1-\exp{\frac{-4{(B-1)}c_{\min}}{\frac{2B}{c_{\min}}+1}}}\right\} 
 \nonumber \\
&+ \exp \left\{-\frac{c_{\min} (B - c_{\min})^2}{B} \right\} + \sum_{\ell=T_0+1}^\infty \exp \left\{-\ell (c_{\min})^2 \right\} \nonumber
\end{align}

\end{theorem}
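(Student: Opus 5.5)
By Lemma~\ref{lemma 2}, the regret of $\mathcal{C}$ is at most $\sum_{i\neq i^\dagger} c_i\Delta_i\,\mathbb{E}[N_{i,T_{\mathcal{C},B}}]$ plus the additive terms $\mu^*_{i^\dagger}/c_{i^\dagger}+\sum_i \nu_i\mu_i^*$, which already appear in the claimed bound. It therefore suffices to show, for each suboptimal worker $i$,
\[
\mathbb{E}[N_{i,T_{\mathcal{C},B}}]\le N_0+\frac{4T_0}{B^2}+\mathcal{F}(B,c_{\min})+2\omega+\nu_i+\frac{\frac{2B}{c_{\min}}+1+\mathcal{F}(B,c_{\min})}{B}.
\]
I would write $N_{i,T}=\sum_{t}\mathbf{1}\{A_t=i,\,B_t\ge 0\}$ and split the horizon at $T_0=\lfloor 2B/c_{\min}\rfloor$.

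\textbf{Step 1: the tail beyond $T_0$.} For rounds $t>T_0$, I bound $\mathbf{P}(A_t=i,B_t\ge0)$ by $\mathbf{P}(B_t\ge0)$ and apply the Hoeffding lemma above, giving $\exp(-2(B-tc_{\min})^2/t)$. Summing over $t>T_0$ should produce $\mathcal{F}(B,c_{\min})$. Write $t=T_0+1+\ell$ and use $(B-tc_{\min})^2/t\ge c_{\min}^2 t/4$, or sum a geometric-type series after bounding the first term at $t=T_0+1\approx 2B/c_{\min}+1$. This reproduces the three pieces of $\mathcal{F}$: the first-term-over-geometric-ratio piece with denominator $1-\exp(\cdot)$, the boundary term $\exp\{-c_{\min}(B-c_{\min})^2/B\}$, and the residual $\sum_{\ell>T_0}e^{-\ell c_{\min}^2}$. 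This part is a purely computational geometric series.

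\textbf{Step 2: rounds $t\le T_0$, pre- and post-convergence.} I would condition on the detection event $F_iD_i$. By the change-detection lemma with $\gamma$ as given, $\mathbf{P}(F_iD_i)>1-1/B$. On $F_iD_i$, worker $i$ can be pulled at most $\nu_i+2\omega$ times before its detected change point, since detection occurs within $\omega$ after $\nu_i$ and the window needs $2\omega$ samples. This yields the terms $\nu_i+2\omega$.

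On the complement, which has probability at most $1/B$, I bound the pulls crudely by the total number of rounds. That number is at most $T_0+1$ plus the expected tail $\mathcal{F}$, so it contributes $(\frac{2B}{c_{\min}}+1+\mathcal{F})/B$.

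After the change point, the standard UCB-ratio argument applies. Worker $i$ is chosen over $i^\dagger$ only if one of three things holds:

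(a) the reward UCB of $i^\dagger$ underestimates $\mu^*_{i^\dagger}$, or its cost LCB overestimates $c_{i^\dagger}$;

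(b) worker $i$'s reward or cost estimate deviates beyond its confidence width;

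(c) $N_{i,t}<N_0$.

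For (c), once $\epsilon_{i,t}\le\min\{\frac{\Delta_i c_i^2}{8(\mu_i^*+c_i)},\frac{c_i}{4}\}$, a perturbation argument shows
\[
\frac{\mu_i^*+2\epsilon}{c_i-2\epsilon}<\frac{\mu^*_{i^\dagger}}{c_{i^\dagger}}.
\]
The $c_i/4$ term keeps the denominator at least $c_i/2$, and the first term then controls the ratio error by $\Delta_i$. This forces $N_{i,t}\le N_0$, with $\delta$ chosen so that $\ln(1/\delta)\asymp\log B$.

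For (a) and (b), the confidence bound of \cite{xia2024llm} (Proposition 1) and Hoeffding for the costs, with $\delta=1/B^2$, make each failure probability $O(1/B^2)$ per round. Summing over $t\le T_0$ with a union over the two failure types for reward and cost gives $4T_0/B^2$.

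\textbf{Main obstacle.} The hardest step is making the post-convergence UCB argument rigorous. The linear-regression estimate $\bar\mu_i$ after a reset must be shown to concentrate around the plateau $\mu_i^*$, and the ratio's denominator clipping at $0.001$ must not break optimism for $i^\dagger$. My first attempt would be to invoke Proposition 1 of \cite{xia2024llm} as a black box on the post-reset data. There the true mean is constant, so the linear fit is unbiased with slope $0$, and I would absorb the $0.001$ clip into the event $\epsilon_{i,t}\le c_i/4$, which already guarantees the clip is inactive.
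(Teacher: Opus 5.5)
Your plan follows the paper's proof almost step for step: the Lemma~2 reduction, conditioning on $F_iD_i$ for the $\nu_i+2\omega$ pre-detection pulls, charging the complement's $1/B$ probability against $T_0+1+\mathcal{F}$, the Hoeffding tail beyond $T_0$, and the two-event UCB-ratio argument (the paper's $E^1_t,E^2_t$) for $N_0+4T_0/B^2$. It also inherits the main hole in the paper's argument, which sits in your event (a).

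Proposition~1 of \cite{xia2024llm} only says the index covers the \emph{current} mean $\mu_{i^\dagger}(t)=\tilde\mu_{i^\dagger}(N_{i^\dagger,t})$, because $\bar\mu_{i^\dagger}$ is the regression fit at the current count. It does not cover the plateau. Until $i^\dagger$ has been pulled $\nu_{i^\dagger}$ times, $\mu_{i^\dagger}(t)$ can be strictly below $\mu^*_{i^\dagger}$. Once the width falls below the remaining rise $\mu^*_{i^\dagger}-\mu_{i^\dagger}(t)$, ``the UCB of $i^\dagger$ is below $\mu^*_{i^\dagger}$'' is the typical outcome, not an $O(1/B^2)$ event. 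An unpulled worker's index is frozen, so nothing in your argument forces $i^\dagger$ out of its rising phase while $i$'s index settles near $\mu_i^*/c_i$. Pulls of $i$ in those rounds are therefore not covered by $4T_0/B^2$. No term of the bound pays for them either: nothing depending on $\nu_{i^\dagger}$ multiplies $c_i\Delta_i$. The paper makes the same move when it asserts $\mathbf{P}(\hat\mu_{i^\dagger}>\mu^*_{i^\dagger}\mid N_{i,t}\ge N_0+\tau_i)\ge 1-1/B^2$. Closing this needs an extra ingredient. One option is a separate count of the rounds before $i^\dagger$ converges. Another is an assumption like $16\sqrt{2\ln(1/\delta)/\nu_{i^\dagger}}\ge\mu^*_{i^\dagger}-\tilde\mu_{i^\dagger}(0)$, so the width alone keeps $i^\dagger$ optimistic. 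What you flag as the main obstacle is the benign part: $i$'s post-reset mean is constant, and the $0.001$ clip is harmless once $c_{\min}\ge 0.001$.

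Two further steps fail as written, again as in the paper.

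First, you cannot have both the $4T_0/B^2$ term and the stated $N_0$. Take the algorithm's width $\epsilon_{i,t}=16\sqrt{2\ln(1/\delta)/N_{i,t}}$ and $m_i:=\min\{\frac{\Delta_i c_i^2}{8(\mu_i^*+c_i)},\frac{c_i}{4}\}$. Then $\epsilon_{i,t}\le m_i$ requires $N_{i,t}\ge 512\ln(1/\delta)/m_i^2$. That is $512N_0$ for your $\delta=1/B^2$. The paper's $\delta=1/B$ gives $256N_0$, and only a $1/B$ per-round failure probability. Writing $\ln(1/\delta)\asymp\log B$ hides exactly the constant the theorem states.

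Second, your Step~1 cannot reproduce the first summand of $\mathcal{F}$. Let $r=\exp\{-4(B-1)c_{\min}/(2B/c_{\min}+1)\}$. A geometric-series argument yields (first term)$/(1-r)$, whereas $\mathcal{F}$ has $1/(1-r)$ inside the exponent. Since $r>e^{-2}$, for large $B$ that summand is smaller than individual terms of the very sum it is supposed to bound. Your estimate $(tc_{\min}-B)^2/t\ge c_{\min}^2t/4$ for $t>T_0$ gives a valid tail bound, just not $\mathcal{F}$.
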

\textbf{Remark:} The term $\mathcal{F}(B,c_{\min})$ tends to zero as $B$ tends to infinity, $N_0$ is a logarithmic relationship with respect to $B$, and the remaining terms are constant terms, therefore, we can get the regret upper bound of CATI-UCB of $O(\log(B))$, which implies a fast convergence rate.
\begin{proof}
    The right side of Lemma~\eqref{lemma 2} shows that the regret of our algorithm is related to the number of selects for each suboptimal $i$, it can be characterized by
    \begin{align*}
\mathbb{E}[N_{i,T_{\mathcal{C},B}}]
&= \mathbb{E}[N_{i,T_{\mathcal{C},B}}-\nu_i] + \nu_i
\\& \leq \mathbb{E}[N_{i,T_{\mathcal{C},B}} - \tau_i | F_iD_i] + \mathbb{E}[\tau_i-\nu_i|F_iD_i] \\&\qquad\qquad+ \mathbb{E}[T_{\mathcal{C},B}|F^c_i D^c_i ] \cdot(1-\mathbf{P}(F_i D_i))+ \nu_i\\
	&\leq \underbrace{\mathbb{E}[N_{i,T_{\mathcal{C},B}} - \tau_i | F_iD_i]}_{\textbf{(i)}} +
	\underbrace{\mathbb{E}[\tau_i - \nu_i | F_iD_i]}_{\textbf{(ii)}} \\&\qquad\qquad+
	 \underbrace{\mathbb{E}[T_{\mathcal{C},B}](1-\mathbf{P}(F_iD_i))}_{\textbf{(iii)}}+\nu_i,
\end{align*}
where the first inequality is due to $N_{i,T_{\mathcal{C},B}} \le T_{\mathcal{C},B}$ and $\mathbf{P}(F_i D_i)\le 1$.

To bound part (i), it can be decomposed as follows,
\begin{align*}
	&\mathbb{E}[N_{i,T_{\mathcal{C},B}} - \tau_i | F_iD_i] \le \mathbb{E}[N_{i,T_{\mathcal{C},B}}  | F_iD_i]
 \\ 
 \le& \mathbb{E}\left[\sum^{T_0}_{t=1}\mathbf{I}\{ A_t = i, F_i  D_i\}\right]+\mathbb{E}\left[\sum^{\infty}_{T_0+1}\mathbf{I}\{ B_t\ge 0  \} \right]
 \\
\le & N_0 + \tau_i + \mathbb{E}\left[\sum^{T_0}_{t=1}\mathbf{I}\{ A_t = i, N_{i,t}\ge N_0 + \tau_i, F_i  D_i\}\right]\\&\qquad\qquad+\mathbb{E}\left[\sum^{\infty}_{T_0+1}\mathbf{I}\{ B_t\ge 0  \} \right].
\end{align*}
For notational convenience, denote $E_t$ as the event $\mathbf{I}\{ A_t = i, N_{i,t}\ge N_0+\nu_i, F_i  D_i\}$. Since $\mu^*_{i^\dagger} / c_{i^\dagger}  = \mu^*_i / c_i + \Delta_i$, if $E_t$ holds, which means that worker $i$ is selected at time $t$, then at least one of the following two events must happen:
\begin{align*}
	E^1_t: & \frac{\hat{\mu}_{i^\dagger}}{\hat{c}_{i^\dagger}}\le \frac{ \mu^*_{i^\dagger}} {c_{i^\dagger}} - \frac{\Delta_i}{2}, N_{i,t}\ge N_0+\tau_i, F_i  D_i, \\ 
	E^2_t: &  \frac{\hat{\mu}_{i}}{\hat{c}_{i}}> \frac{ \mu^*_{i}} {c_{i}} + \frac{\Delta_i}{2},  N_{i,t}\ge N_0+\tau_i, F_i  D_i.
\end{align*}
Otherwise, let $\{E^1_t\}^c$ and $\{E^2_t\}^c$ denote the complements event of $E^1_t$ and $E^2_t$. If both occur simultaneously, then:
\begin{align*}
    \{E^1_t\}^c \Rightarrow \frac{\hat{\mu}_{i^\dagger}}{\hat{c}_{i^\dagger}}& > \frac{ \mu^*_{i}} {c_{i}} +\Delta_i - \frac{\Delta_i}{2} 
    = \frac{\mu_i^*}{c_i^*} + \frac{\Delta_i}{2} \ge \frac{\hat{\mu}_{i}}{\hat{c}_{i}},
\end{align*}
where the last inequality corresponds to $\{E^2_t\}^c$. Thus, we have $\frac{\hat{\mu}_{i^\dagger}}{\hat{c}_{i^\dagger}} > \frac{\hat{\mu}_{i}}{\hat{c}_{i}}$, which contradicts the fact that worker $i$ was selected in round $t$.

By the concentration inequality (set $\delta = 1/B$), 
\begin{align*}
	&\mathbf{P}\left (\hat{\mu}_{i^\dagger} > \mu^*_{i^\dagger}|N_{i,t} \ge N_0 +\tau_i \right) \ge 1-\frac{1}{B^2},\\
	& \mathbf{P}\left (\hat{c}_{i^\dagger} < c_{i^\dagger}|N_{i,t} \ge N_0 +\tau_i \right) \ge 1-\frac{1}{B^2},
\end{align*} 
which gives $\mathbf{P}\left( E^1_t  \right) \le \frac{2}{B^2}$.
Similarly, with probability at least $1-\frac{2}{B^2}$,
\begin{align*}
\frac{\hat{\mu}_i}{\hat{c}_i} &\le \frac{\mu^*_i+2\epsilon_{i,t}}{\max\{c_i -2\epsilon_{i,t},\eta \}} 
\overset{(a)}{\le} 	\frac{\mu^*_i+2\epsilon_{i,t}}{c_i -2\epsilon_{i,t}}   = \frac{\mu^*_i}{c_i} + \frac{2\epsilon_{i,t}(\mu^*_i+c_i)}{c_i(c_i-2\epsilon_{i,t})}
\\&\overset{(b)}{\le} \frac{\mu^*_i}{c_i} + \frac{4\epsilon_{i,t}(\mu^*_i+c_i)}{(c_i)^2} \overset{(c)}{\le}  \frac{\mu^*_i}{c_i}  + \frac{\Delta_i}{2}.
\end{align*}
where the inequalities (a), (b), and (c) hold since $N_{i,t}\ge N_0$ ensures that 
$2\epsilon_{i,t}\le 0.5 c_i$ and $\epsilon_{i,t}\le \frac{\Delta_i(c_i)^2}{8(\mu^*_i+c_i)}$. Therefore, $\mathbf{P}(A^2_t)\le \frac{2}{B^2}$ ,
and thus
\begin{align*}
	&\mathbb{E}[N_{i,T_{\mathcal{C},B}} - \tau_i | F_iD_i] \\ 
    \le& N_0 + \sum^{T_0}_{t=N_0+\tau_i+1}\mathbf{I}\{ A_t\}+\sum^{\infty}_{T_0+1}\mathbf{I}\{ B_t\ge 0  \} \\
	 \le& N_0 + \frac{4T_0}{B^2} + \sum^{\infty}_{T_0+1}\mathbf{I}\{ B_t\ge 0  \} + \nu_i + w.
\end{align*}
Accroding to the Lemma 4, for any $t\ge T_0$, we have $\mathbf{P}(B_{t+1}\ge 0)\le \exp(-2(B-tc_{\min})^2/t)$. Now we are going to bound $\sum_{t=T_0+1}^\infty \mathbf{P}(B_t \geq 0)$. It holds that
\begin{align*}
&\sum_{t=T_0+1}^\infty \mathbf{P}(B_t \geq 0) \le\sum_{t=T_0}^\infty \mathbf{P}(B_{t+1} \geq 0)
\\
=& \exp \left\{-\frac{c_{\min} (B - c_{\min})^2}{B} \right\} + \sum_{t=T_0+1}^\infty \mathbf{P}(B_{t+1} \geq 0)
\\
=& X + \sum_{\ell=1}^\infty \exp \left\{-\frac{2 (B - T_0\cdot c_{\min}-\ell \cdot c_{\min})^2}{\ell+T_0} \right\}
\\
 \leq & X + \sum_{\ell=1}^\infty \exp \left\{-\frac{2 (B-c_{\min}+\ell\cdot c_{\min})^2}{\ell+T_0} \right\}
\\
\leq & X + \sum_{\ell=1}^\infty \exp \left\{-\frac{2(B- c_{\min}+\ell \cdot c_{\min})^2}{\frac{2B}{c_{\min}}+ \ell} \right\} \\
 \leq & X + \sum_{\ell=1}^{T_0} \exp \left\{-\frac{2(B- c_{\min}+\ell \cdot c_{\min})^2}{\frac{2B}{c_{\min}}+ \ell} \right\} \\&\quad+ \sum_{\ell=T_0+1}^{\infty} \exp \left\{-\frac{2(B-c_{\text{min}}+\ell \cdot c_{\min})^2}{\frac{2B}{c_{\min}}+ \ell} \right\}
\\
\leq & X + Z + \sum_{\ell=T_0+1}^\infty \exp \left\{-\ell (c_{\min})^2 \right\},
\end{align*}
where the second equality substitute $\exp \left\{-\frac{c_{\min} (B - c_{\min})^2}{B} \right\}$ to X, the second and third inequalities are due to the definition of $T_0$, the forth inequality uses $Z$ to substitute $\sum_{\ell=1}^{T_0} \exp \left\{-\frac{2(B- c_{\min}+\ell \cdot c_{\min})^2}{\frac{2B}{c_{\min}}+ \ell} \right\}$, and this term can be bounded by $\exp\left\{\frac{-2{(B-1)}^2}{\frac{2B}{c_{\text{min}}}+1}\cdot\frac{1}{1-\exp{\frac{-4{(B-1)}c_{\text{min}}}{\frac{2B}{c_{\text{min}}}+1}}}\right\}$. We use $\mathcal{F}(B,c_{\text{min}})$ to bound the right side of the above inequality. Please note that $\mathcal{F}(B,c_{\text{min}})$ tends to zero as $B$ tends to infinity.

Combining all things together we obtain for any $i \neq i^\dagger$,
\begin{equation}
	\mathbb{E}[N_{i,T_{\mathcal{C},B}} - \tau_i | F_iD_i]\le N_0 + \frac{4T_0}{B^2} + \mathcal{F}(B,c_{\min}) + \omega + \nu_i.\label{regret_term1}
\end{equation}
For the part (ii), it can be bounded by 
\begin{equation}\label{regret_term2}
	\mathbb{E}[ \tau_i - \nu_i | F_iD_i] \le \omega.
\end{equation}
According to the Lemma 3, the part (iii) can be bound by:
\begin{align}\label{regret_term3}
    \mathbb{E}[T_{\mathcal{C},B}](1-\mathbf{P}(F_iD_i)) &\le \frac{T_0 + \mathbb{E}\left[\sum^{\infty}_{T_0+1}\mathbf{I}\{ B_t\ge 0  \} \right]}{B}\nonumber
    \\&\le \frac{\frac{2B}{c_{\min}} + 1 +\mathcal{F}(B,c_{\min})}{B}.
\end{align}

Combining~\eqref{regret_term1},~\eqref{regret_term2} and~\eqref{regret_term3}, we bound $\mathbb{E}[N_{i,T_c,B}]$ as:
\begin{align*}
    \mathbb{E}[N_{i,T_{\mathcal{C},B}}] &\le N_0 + \frac{4T_0}{B^2} + \mathcal{F}(B,c_{\min}) + 2\omega \\&\qquad\qquad+ \nu_i+ \frac{\frac{2B}{c_{\min}} + 1 +\mathcal{F}(B,c_{\min})}{B}.
\end{align*}

By Lemma~\ref{lemma 2}, the regret of CATI-UCB can be bound by:
\begin{align*}
   & \text{Regret}(\mathcal{C},B)\le \sum_{i \neq i^\dagger} c_i \Delta_i \mathbb{E}[N_{i,T_{\mathcal{C},B}}] 
+ \frac{\mu^*_{i^\dagger}}{c_{i^\dagger}} +  \sum_{i\in [N]} \nu_i \mu^*_i \\
&\qquad \le \sum_{i \neq i^\dagger} c_i \Delta_i\cdot  
(N_0 + \frac{4T_0}{B^2} + \mathcal{F}(B,c_{\min}) + 2\omega + \nu_i+ \\&\qquad \frac{\frac{2B}{c_{\min}} + 1 +\mathcal{F}(B,c_{\min})}{B}
) +  \frac{\mu^*_{i^\dagger}}{c_{i^\dagger}} +  \sum_{i\in [N]} \nu_i \mu^*_i. \qedhere
\end{align*}
\end{proof}

\textbf{Remark 1: Extension to Multiple Task Types.}
CATI-UCB also extends to multiple task types. Instead of maintaining one set of statistics for each worker, we maintain separate statistics for each $(\text{worker}, \text{task type})$ pair.
Specifically, for each worker $i$ and task type $m$, we maintain:
(i) the type-dependent participation count $N_{i,m}$,
(ii) the type-dependent reward-learning curve parameters used in the online linear prediction module,
(iii) the type-dependent cost estimate, and
(iv) the corresponding change-detection statistics for convergence identification.
Then, when a task of type $m_t$ arrives at round $t$, CATI-UCB applies exactly the same cost-aware UCB rule, but conditioned on the statistics associated with type $m_t$.
After observing the reward and cost, only the selected $(i,m_t)$ pair is updated.
Hence, the core mechanism of CATI-UCB remains unchanged; the extension simply refines the estimation granularity from worker-level to worker-type-level.

This extension is algorithmically straightforward, but its statistical efficiency decreases as the number of task types grows. Under a fixed budget, each pair is observed less frequently, so its confidence radius, $O\!\left(\sqrt{\frac{\log B}{N_{i,m}}}\right),$
shrinks more slowly, while at least $2\omega$ observations are required before change detection can be performed. Thus, infrequent task types lead to slower reward and cost estimation, delayed convergence detection, and potentially larger regret. The current analysis therefore treats the heterogeneous-task extension as an applicability result for a finite and relatively small number of task types; deriving an explicit task-type-dependent regret bound is left for future work.

{\textbf{Remark 2: Extension to Combinatorial Selection.}
CATI-UCB can also be extended to a combinatorial setting in which up to $m$ workers are selected in each round.
In the simplest additive case, the algorithm still maintains worker-level reward-learning, cost-estimation, and change-detection statistics as in the single-worker setting, while the action in each round becomes the selection of a feasible subset.
Then, instead of choosing the single worker with the largest estimated reward-cost ratio, the platform chooses the subset of up to $m$ workers with the largest estimated reward-cost ratios.
If the optimal subset is denoted by $S^*$, then for any candidate subset $S$ one may define the gap $\Delta_S=\sum_{i\in S^*}\mu_i^*-\sum_{i\in S}\mu_i^*,$
which leads to a regret decomposition of the form $\mathrm{Regret}\le \sum_{S\neq S^*}\Delta_S\,\mathbb{E}[N_S]+O(1).$ Since every suboptimal subset must contain at least one suboptimal worker, the number of selections of such subsets can be bounded through the number of pulls of suboptimal workers.
Thus, the same high-level proof idea still yields a regret upper bound.}

\section{Experiment}
{In this section, we first evaluate CATI-UCB on synthetic data under piecewise linear and negative exponential reward functions to validate its performance and robustness.
We then present additional experiments on hyperparameter sensitivity, heterogeneous task types, and trace-driven real-world data to further demonstrate the effectiveness of the proposed framework.}

\subsection{Experimental Setup}
We first introduce the baselines and experimental settings, and then describe the evaluation metrics.

\subsubsection{Baselines} We compare our proposed algorithm against the following baseline algorithms and methods:
\begin{itemize}
    \item Primal-Dual~\cite{diaz2023flexible}: a primal-dual algorithm designed for constrained budget bandits problem.
    \item TIUCB~\cite{xia2024llm}: a model selection algorithm designed for time-increasing bandits problem.
    \item Budget-TIUCB: a time-increasing bandit algorithm that takes into account the constraint budget during the process, which is adapted based on~\cite{xia2024llm}.
    \item $\text{UCB}_c$~\cite{heyden2024budgeted}: A bandit algorithm that maximizes total reward under a budget constraint by dividing the reward’s upper bound by the cost’s lower bound.
\end{itemize}

\subsubsection{Parameter Setting and Metric.} For our proposed CATI-UCB algorithm, we set the change detection window $\omega = 16$ and threshold $\gamma = 0.2$ in all experiments. To introduce the variability, random noise is uniformly sampled from $[-0.05,0.05]$ for the synthetic simulations and is added to both the realized reward and cost at each round. All experiments are conducted under budget constraints $B \in \{50,100,200,400\}$, and each setting is repeated 20 times with independent random seeds to ensure robustness and reliability. {
The reported regret is computed with respect to the same offline oracle under the corresponding budget setting, i.e.,
\[
\mathrm{Regret}(A,B)=R^*(B)-\mathbb{E}[R_{A,B}],
\]
where $R^*(B)$ denotes the optimal expected cumulative reward achieved by an oracle that knows the reward trajectories and expected costs in advance.
Hence, the regret values quantify the gap between each online policy and the best achievable expected utility under the same budget.
We also report the mean reward, mean cost, reward-cost ratio, stopping time, and optimal-selection percentage for interpretability. According to the Lemma 1, the optimal strategy is to allocate all budget to the worker with highest reward-cost ratio.


\subsection{Piecewise Linear Function Data}
\begin{figure}[th]
    \centering
    \subfigure[Reward Functions]{
        \begin{minipage}[b]{0.49\linewidth}
        \includegraphics[width=0.99\textwidth]{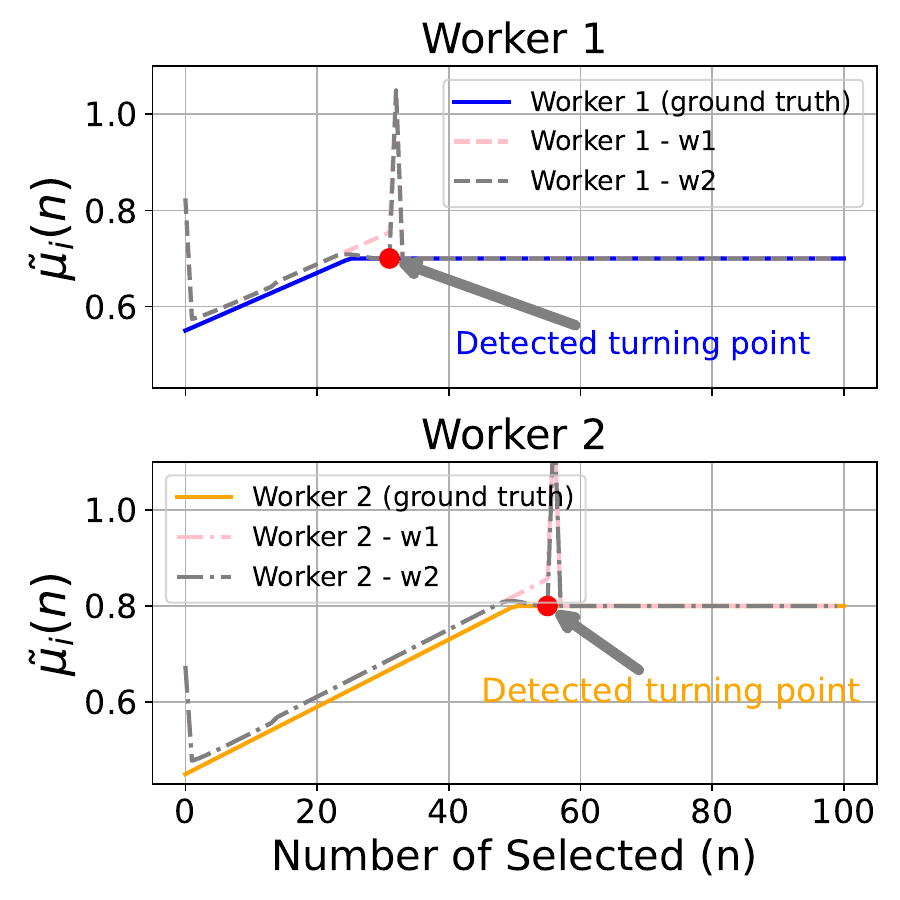}
        \end{minipage}
    \label{fig:piece_reward_fun}
    }
    \hspace{-6mm}
    \subfigure[Cumulative Regret]{
        \begin{minipage}[b]{0.49\linewidth}            \includegraphics[width=0.99\textwidth]{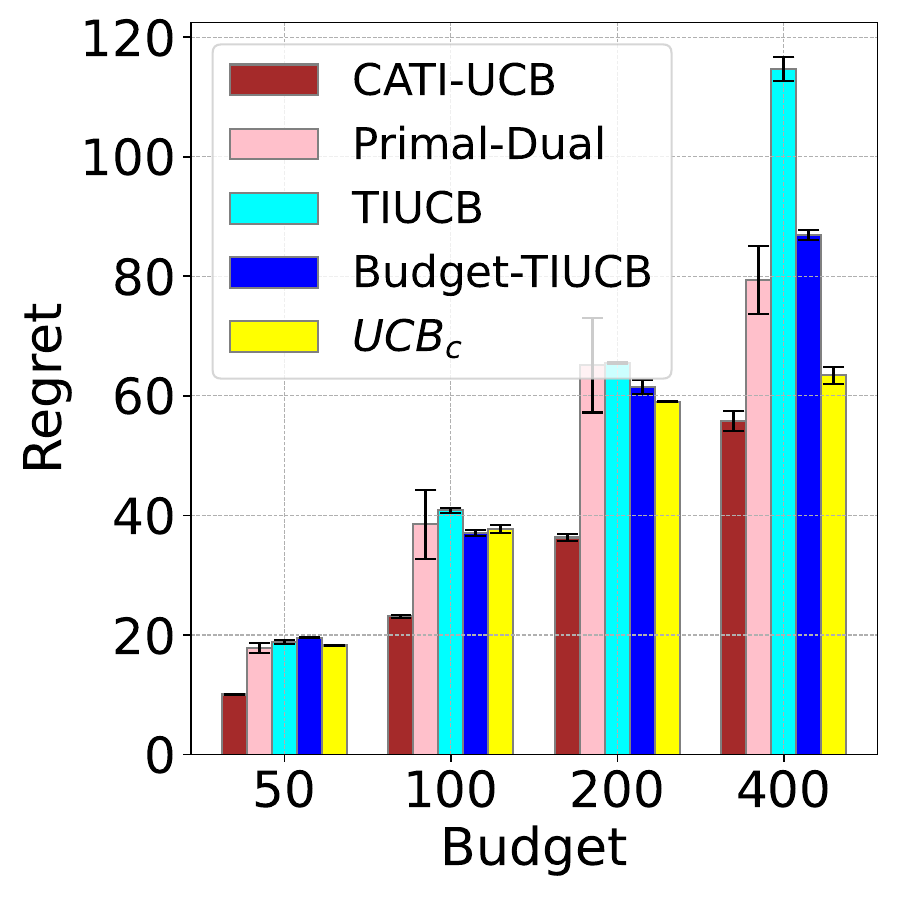}
        \end{minipage}
        \label{fig:piece_regret}
    }
    \caption{Performance of Piecewise Linear Function}
    \label{fig:Performance of 2-Arm}
\end{figure}
\begin{table}
\centering
\caption{\centering Statistics of the Piecewise Linear Function (Budget = 200)}
\begin{tabular}{|c|c|c|c|c|c|}
\hline
     & $\bar{r}$ & $\bar{c}$ & $\frac{\bar{r}}{\bar{c}}$ & $\bar{\tau}$ & (\%)opt \\ \hline
CATI-UCB      & 0.741                    & 0.602                    & 1.231                                                                        & 357.4             & 69.66  \\ \hline
Primal-Dual  & 0.749                    & 0.682                    &1.097                                                                        & 314.8             & 58.57  \\ \hline
TIUCB         & 0.749                    & 0.684                    & 1.095                                                                        & 314.2             & 58.62  \\ \hline
Budget-TIUCB & 0.753                   & 0.676                    & 1.114                                                                       & 318.0             & 62.26  \\ \hline
$\text{UCB}_c$       & 0.755                    & 0.671                    & 1.125                                                                        & 320.6             & 55.95  \\ \hline
\end{tabular}
\label{table-1}
\end{table}

We consider a synthetic setting where each worker’s reward increases linearly and then plateaus. As shown in Fig~\ref{fig:piece_reward_fun}, Worker~1 starts at $\mu_1(0) = 0.55$ and reaches $\mu_1^\star = 0.70$ after $\nu_1 = 25$ selects, while Worker~2 starts at $\mu_2(0) = 0.45$ and surpasses Worker~1 by reaching $\mu_2^\star = 0.80$ after $\nu_2 = 50$ selects. For any $i\in\{1,2\}$, the reward function is $\mu_i(t) = \mu_i(0) + \frac{\mu_i^\star - \mu_i(0)}{\nu_i} t$ for $t < \nu_i$ and $\mu_i(t) = \mu_i^\star$ for $t \ge \nu_i$.

To evaluate the change-point detection module in \textsc{CATI-UCB}, we also visualize the estimated reward curves under two window sizes (\texttt{w1}, \texttt{w2}). When the difference between these two estimates exceeds a detection threshold, the algorithm detects a change point, which is marked by the red dot. After detection, the algorithm resets the selecting statistics, resulting in a sharp spike in the estimated reward due to setting the count $N_{i,t} = 1$. As more observations accumulate, the estimate stabilizes and gradually converges to the true post-change value. A similar pattern is observed for Worker~2, showing the robustness of our change-point detection scheme.

Fig~\ref{fig:piece_regret} reports the cumulative regret with 2 workers as budget increases. \textsc{CATI-UCB} consistently outperforms all baselines and achieves sublinear regret.  Except for TIUCB, all other algorithms can achieve sublinear regret because they are also designed to balance reward and cost. Note that none of these algorithms have theoretical guarantees under the increasing-then-converge with budget constraint bandit problem. While $\text{UCB}_c$ performs well due to cost-awareness, it fails to model reward dynamics and underperforms our method. TIUCB performs worst, as it disregards budget constraints and tends to select high-cost workers with large empirical rewards, exhausting budget prematurely. 

We further report mean reward $\bar{r}$, mean cost $\bar{c}$, reward-cost ratio $\bar{r}/\bar{c}$, stopping time $\tau$, and optimal selecting percentage (\%opt) for $B=200$ in Table~\ref{table-1}. Table~\ref{table-1} shows that TIUCB has the largest mean costs for selecting the worker of these algorithms, which leads to the smallest termination time. \textsc{CATI-UCB} achieves the best $\bar{r}/\bar{c}$, the longest task horizon, and the highest optimal selection rate, showing its ability to balance reward and cost while adapting to worker quality.


\subsection{Negative Exponential Function Data}
Beyond the piecewise linear setting, prior studies have suggested that a worker’s learning curve can also be well captured by a negative exponential function \cite{wang2017recommending}. To validate the robustness of our algorithm under alternative reward structures, we conduct additional experiments where each worker’s expected reward follows the negative exponential function. 

We use two sets of synthetic reward functions to simulate models with increasing-then-converging performance patterns. The formats of two functions are as follows:
\begin{align}
    F_{\text{exp}} = \{f(t) = c(1-e^{-at})\} \quad \text{, and}\nonumber \\
    F_{\text{poly}} = \{f(t) = c(1-b{(t+b^{1/\rho})}^{-\rho})\}.\nonumber
\end{align}
These two families of functions are able to represent the general increasing-then-converging pattern of different shapes~\cite{metelli2022stochastic,xia2024llm}, where functions originating from $F_{\text{exp}}$ exhibit a rapid increase before converging, while those from $F_{\text{poly}}$ may display much slower growth rates.
\begin{figure}[th]
    \centering
    \subfigure[Reward Functions]{
        \begin{minipage}[b]{0.49\linewidth}
        \includegraphics[width=0.99\textwidth]{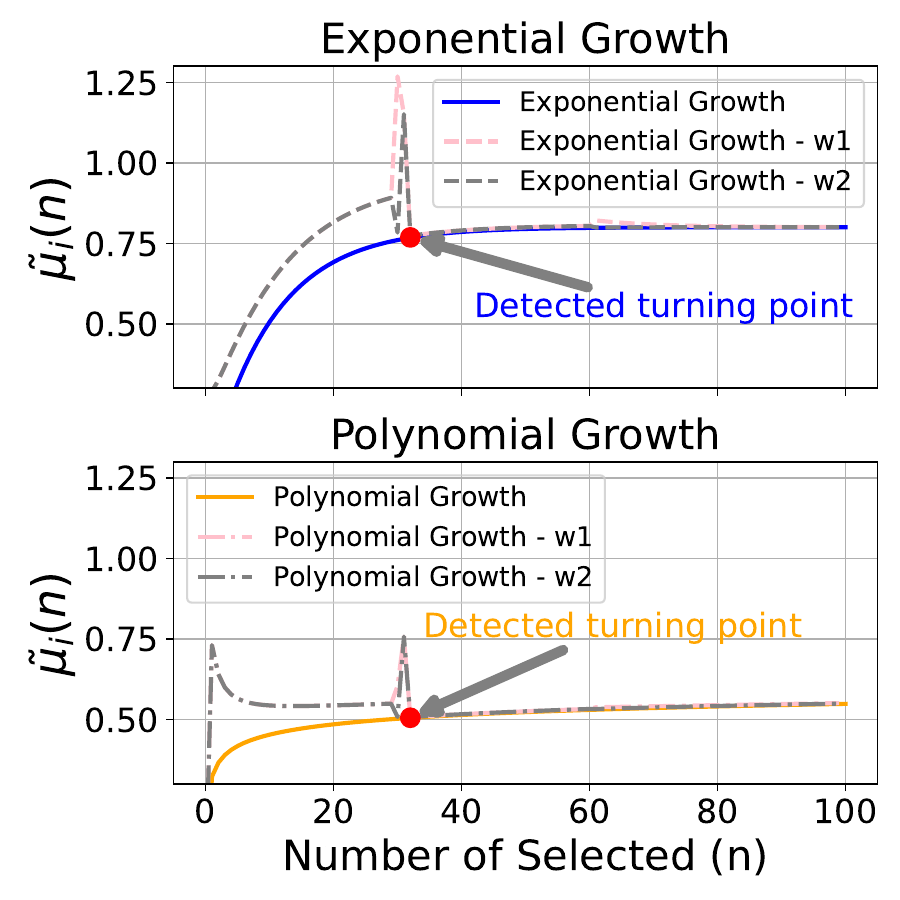}
        \end{minipage}
    \label{fig:2arm_reward_fun}
    }
    \hspace{-6mm}
    \subfigure[Cumulative Regret]{
        \begin{minipage}[b]{0.49\linewidth}            \includegraphics[width=0.99\textwidth]{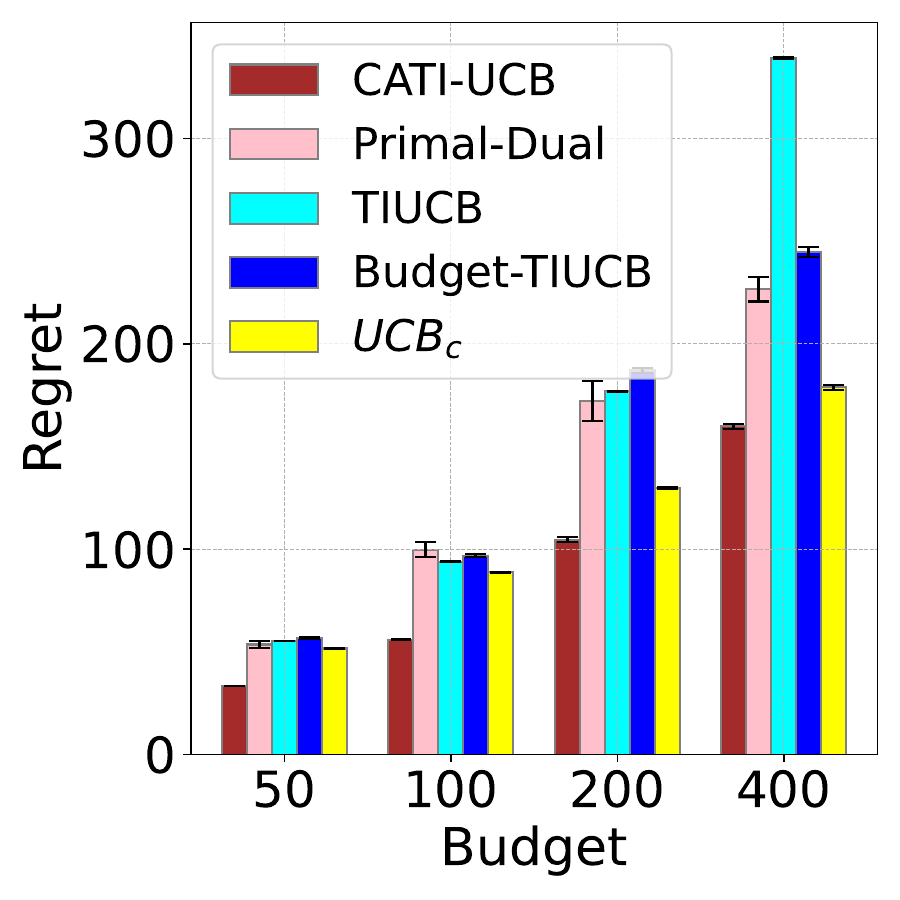}
        \end{minipage}
        \label{fig:2arm_regret}
    }
    \caption{Performance of Negative Exponential Function}
    \label{fig:Performance of 2-Arm}
\end{figure}

\begin{table}
\caption{\centering Statistics of The Negative Exponential Function (Budget = 200)}
\centering
\begin{tabular}{|c|c|c|c|c|c|}
\hline
     & $\bar{r}$ & $\bar{c}$ & $\frac{\bar{r}}{\bar{c}}$ & $\bar{\tau}$ & (\%)opt \\ \hline
CATI-UCB      & 0.584                    & 0.356                    & 1.643                                                                        & 562.2             & 85.80  \\ \hline
Primal-Dual  & 0.645                    & 0.494                    &1.305                                                                        & 404.8             & 51.43  \\ \hline
TIUCB         & 0.652                    & 0.508                    & 1.282                                                                        & 393.0             & 47.63  \\ \hline
Budget-TIUCB & 0.671                   & 0.545                    & 1.231                                                                       & 367.2             & 38.77  \\ \hline
$\text{UCB}_c$       & 0.601                    & 0.396                    & 1.518                                                                        & 504.8             & 75.79  \\ \hline
\end{tabular}
\label{table-2}
\end{table}
\begin{figure}[th]
    \centering
    \subfigure[Reward Functions]{
        \begin{minipage}[b]{0.49\linewidth}
        \includegraphics[width=0.99\textwidth]{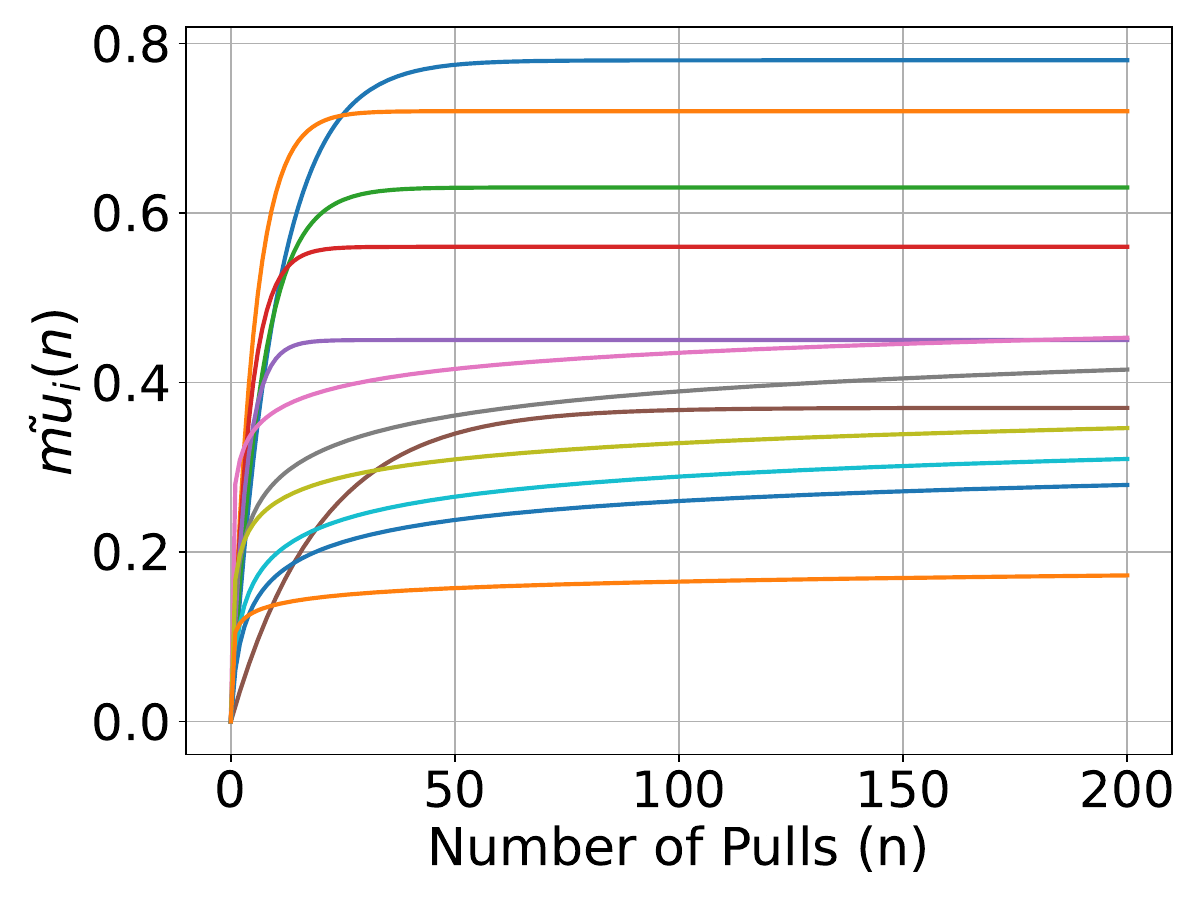}
        \end{minipage}
    \label{fig:12arm_reward_fun}
    }
    \hspace{-6mm}
    \subfigure[Cumulative Regret]{
        \begin{minipage}[b]{0.49\linewidth}            \includegraphics[width=0.99\textwidth]{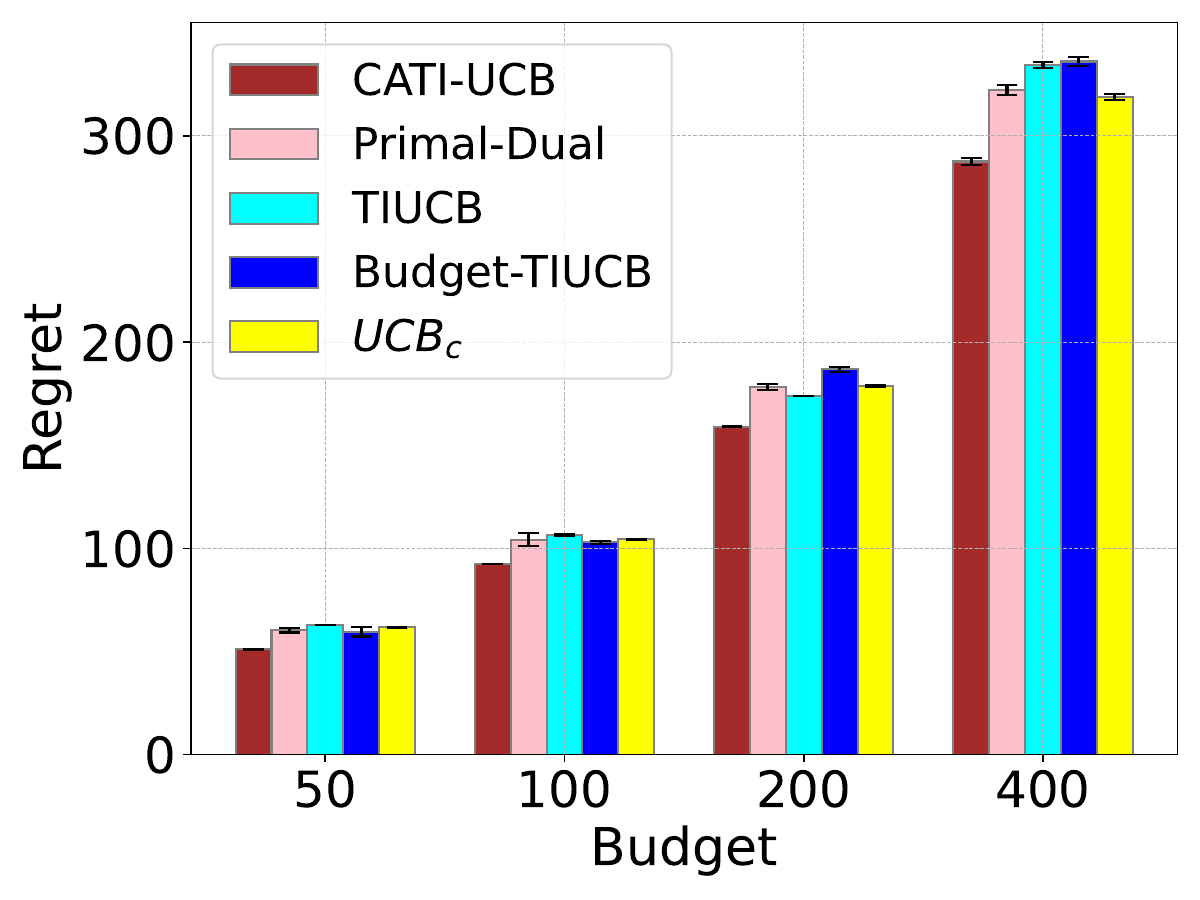}
        \end{minipage}
        \label{fig:12arm_regret}
    }
    \caption{Performance of 12-worker Bandits}
    \label{fig:Performance_of_12-arm}
\end{figure}
\begin{table}[!htbp]
    \caption{\centering Statistics of the 12-worker Time increasing Bandit (Budget = 200)}
\centering
\begin{tabular}{|c|c|c|c|c|c|}
\hline
     & $\bar{r}$ & $\bar{c}$ & $\frac{\bar{r}}{\bar{c}}$ & $\bar{\tau}$ & (\%)opt \\ \hline
CATI-UCB      & 0.457                    & 0.379                   & 1.204                                                                        & 784.0            & 34.29  \\ \hline
Primal-Dual  & 0.303                    & 0.273                    & 1.108                                                                        & 732.4             & 20.56  \\ \hline
TIUCB         & 0.434                    & 0.384                    & 1.131                                                                        & 520.2             & 9.57   \\ \hline
Budget-TIUCB & 0.271                    & 0.255                   &1.065                                                                        & 527.4             & 18.29  \\ \hline
$\text{UCB}_c$       & 0.418                    & 0.379                    & 1.106                                                                        & 528.4             & 30.02  \\ \hline
\end{tabular}
\label{table-3}
\end{table}
\subsubsection{2-worker Bandits} We conducted a two-worker bandit experiment with $a, b, c, \beta$ set to $0.0001, 1.0, 0.8$ and $0.1$, and the reward values shown in Fig~\ref{fig:2arm_reward_fun}. To test the trade-off between reward and cost, we assigned the exponential worker a high stable reward and 0.7 cost, and the polynomial worker a lower reward and 0.3 cost. Although the reward is no longer piecewise linear, CATI-UCB can still track the dynamics by fitting early-stage observations and applying change-point detection. As shown in the left panel, the algorithm identifies convergence points when estimates from two window sizes diverge, triggering a reset. This causes a brief spike due to $N_{i,t}=1$, which quickly stabilizes with more samples.

Fig.~\ref{fig:2arm_regret} and Table~\ref{table-2} also shows that our findings are consistent with the piecewise linear case: CATI-UCB consistently achieves the lowest cumulative regret across all budget levels, especially when the budget is large and the full learning dynamics unfold. Compared to $\text{UCB}_c$, which balances reward and cost but cannot adapt to time-evolving reward structures, CATI-UCB benefits from modeling the non-stationary growth and transitions. TIUCB again performs worst, as it does not account for budget constraints and tends to over-select costly workers based on short-term empirical averages. Due to the smoother reward transitions and slower initial growth, all algorithms exhibit slightly higher regret compared to the piecewise case. Nonetheless, CATI-UCB retains a clear advantage under this more realistic and challenging reward evolution.

\begin{figure}[th]
    \centering
    \subfigure[Window size $w$]{
        \begin{minipage}[b]{0.49\linewidth}
        \includegraphics[width=0.99\textwidth]{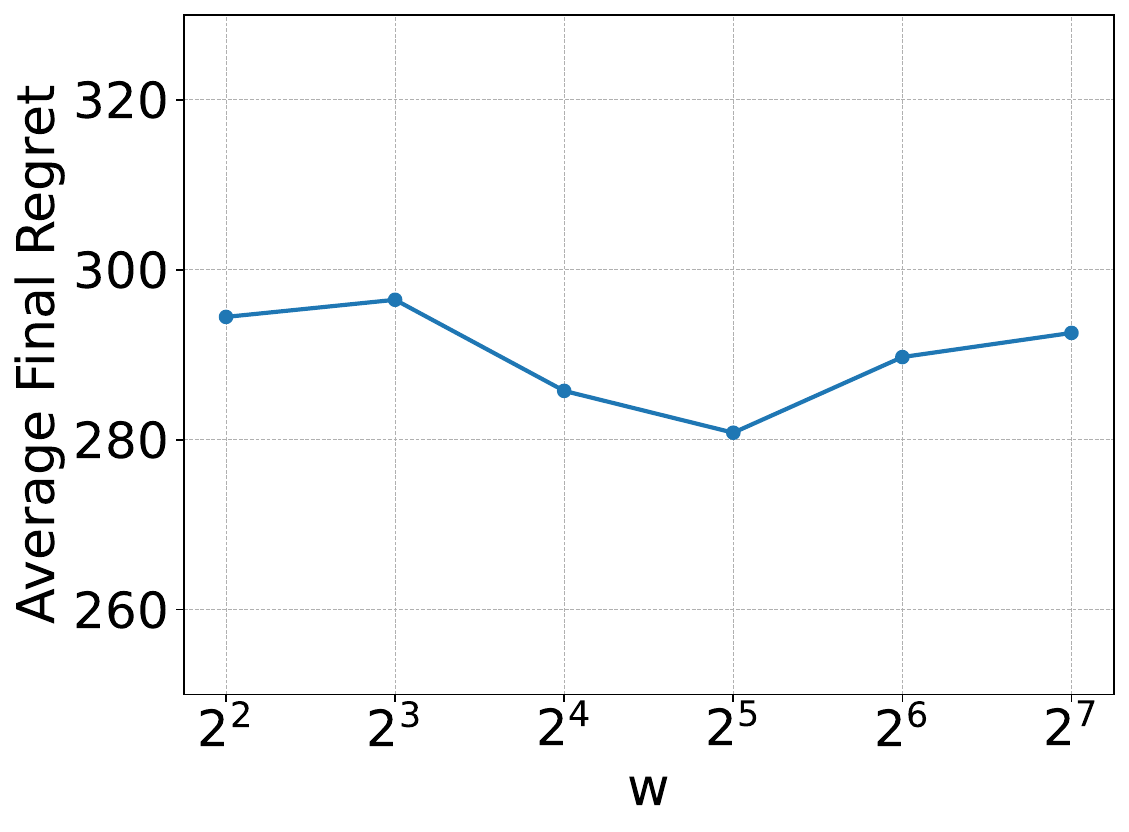}
        \end{minipage}
    \label{fig:window_size_change}
    }
    \hspace{-4mm}
    \subfigure[Threshold $\gamma$]{
        \begin{minipage}[b]{0.49\linewidth}            \includegraphics[width=0.99\textwidth]{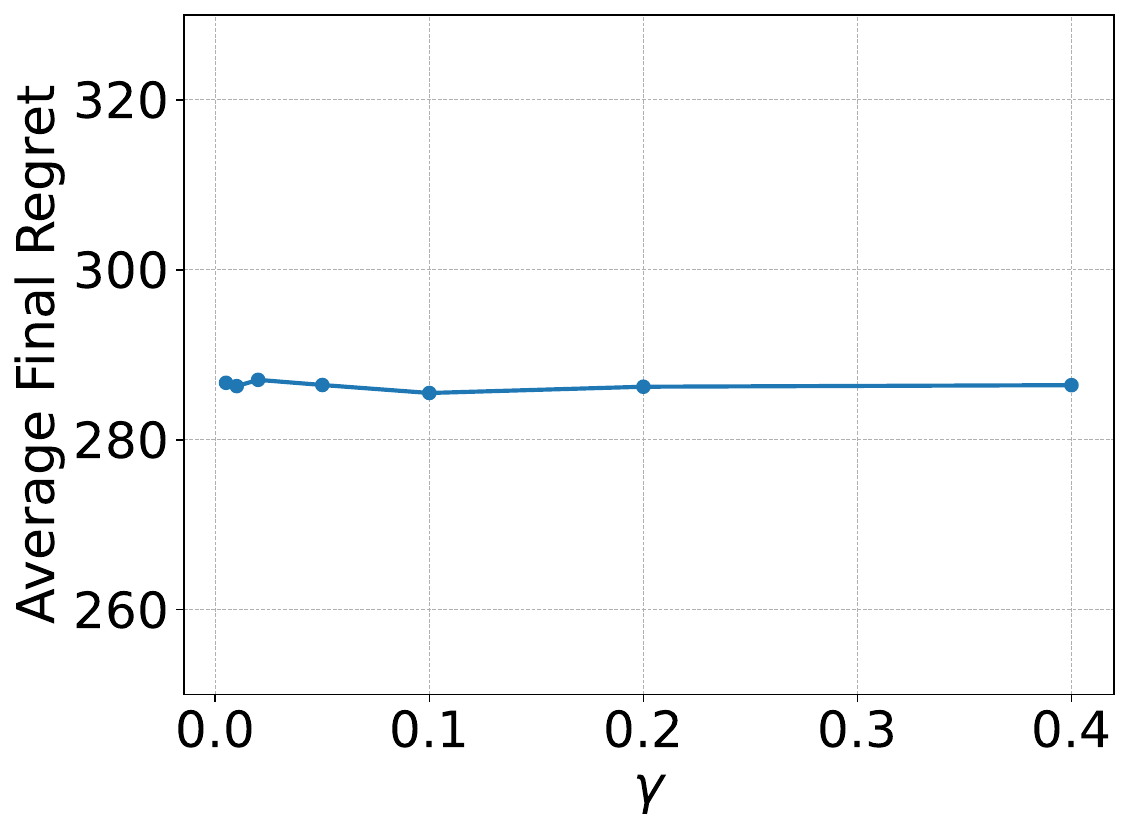}
        \end{minipage}
        \label{fig:threshold_change}
    }
    \caption{Sensitivity of CATI-UCB to Hyperparameters}
    \label{Sensitivity of CATI-UCB}
\end{figure}

\subsubsection{12-worker Bandits}
In order to generalize the effectiveness of our experimental results, we also compare the performance of each algorithm on 12 workers. Then the value of $a,c,\rho \in (0,1]$ and $b \in \mathbb{R} \ge 0$ are selected randomly and the results are shown in Fig~\ref{fig:12arm_regret} and Table~\ref{table-3}. As the number of workers increases, each algorithm needs to spend more cost to explore the worker, resulting in a larger regret, so the total regret is divided by 10 to better show the performance of each model. Table~\ref{table-3} shows that the selecting percentage of the optimal (\%opt) of each algorithm decreases than that of 2-worker bandit setting, further verifying that as the number of workers increases, the algorithm spends more cost to learn the suboptimal worker. However, CATI-UCB still achieves the highest percentage of selecting the optimal worker and the highest $\frac{\bar{r}}{\bar{c}}$. TIUCB achieves the smallest percentage of selecting the optimal worker and the smallest $\frac{\bar{r}}{\bar{c}}$, which is consistent with the CATI-UCB outperform all baselines in Fig~\ref{fig:12arm_regret}.

\subsection{Sensitivity to Change-Detection Hyperparameters}

We next examine the sensitivity of CATI-UCB to the two hyperparameters in the change-detection module, namely the window size $\omega$ and the threshold $\gamma$.
Unless otherwise specified, all sensitivity experiments in this subsection are conducted under the same \emph{12-arm synthetic setting} as in the main experiments. We report the average final regret over repeated runs.

Fig.~\ref{fig:window_size_change} shows the effect of varying the window size over $\omega \in \{2^2,2^3,\ldots,2^7\}$.
The result exhibits a clear non-monotonic pattern.
When $\omega$ is too small, the detector becomes overly sensitive to local fluctuations, which leads to unstable resets and higher regret.
When $\omega$ is too large, detection becomes delayed, so the algorithm adapts too slowly after the reward trend changes.
The best result in this experiment is achieved around $\omega=2^5$, while moderate window sizes such as $\omega\in\{16,32\}$ clearly outperform the extreme choices.
In the main experiments, we use $\omega=16$ as a representative moderate setting that performs competitively while preserving a shorter reaction horizon.

Fig.~\ref{fig:threshold_change} reports the effect of varying the threshold over $\gamma \in \{0.01,0.02,0.05,0.1,0.2,0.4\}$.
Compared with the window-size study, the regret varies only mildly across a broad range of threshold values, indicating that CATI-UCB is fairly robust to the exact choice of $\gamma$ once it is set in a reasonable interval.
This is also consistent with the algorithmic mechanism: since the detection decision is based on averaged statistics from two windows, the smoothing induced by $\omega$ already stabilizes the detector, making performance less sensitive to the precise threshold value.
In our experiment, moderate thresholds around $\gamma\in[0.1,0.2]$ provide slightly better performance, which is why we use $\gamma=0.2$ in the main text.}

\subsection{Heterogeneous Task Types}

To validate that CATI-UCB extends naturally beyond the homogeneous-task setting, we further consider a heterogeneous-task scenario with two task types.
In this setting, each worker has task-type-dependent reward dynamics, so the algorithm maintains separate reward-learning, cost-estimation, and change-detection statistics for each $(\text{worker}, \text{task type})$ pair, as discussed in the above remark on multiple task types.

We first consider a toy setting with $2$ workers and $2$ task types.
Fig.~\ref{fig:two_task_two_arm_reward_fun} shows representative type-specific reward trajectories under both exponential-growth and polynomial-growth patterns.
The figure also illustrates the two window-based prediction curves and the detected turning point for each worker--task pair.
We observe that CATI-UCB still tracks the reward evolution accurately and successfully detects the transition from the growth phase to the saturation phase for different task types. Then we consider a larger setting with $12$ workers and $2$ task types.
The corresponding type-specific reward trajectories are shown in Fig.~\ref{fig:two_task_12_arm_reward_fun}.
Compared with the $2$-worker case, the platform now needs to learn many more worker--task pairs, so the exploration burden is higher under the same budget.

Since, in the heterogeneous-task setting, the realized reward depends on the task-type sequence and each worker has type-dependent reward functions, we report cumulative reward instead of a single regret quantity.
Fig.~\ref{fig:Performance of heterogeneous-task} shows the cumulative reward for both settings.
CATI-UCB achieves the highest cumulative reward in both cases.
At the same time, the overall reward in the $12$-worker case is lower than that in the $2$-worker case under the same budget, since more budget must be spent on exploration to learn the task-type-dependent reward functions.
These results show that the proposed framework remains effective beyond the homogeneous-task case and continues to outperform the baselines in heterogeneous-task settings.

\begin{figure}[tb]
	\centering
	\includegraphics[width=0.9\linewidth]{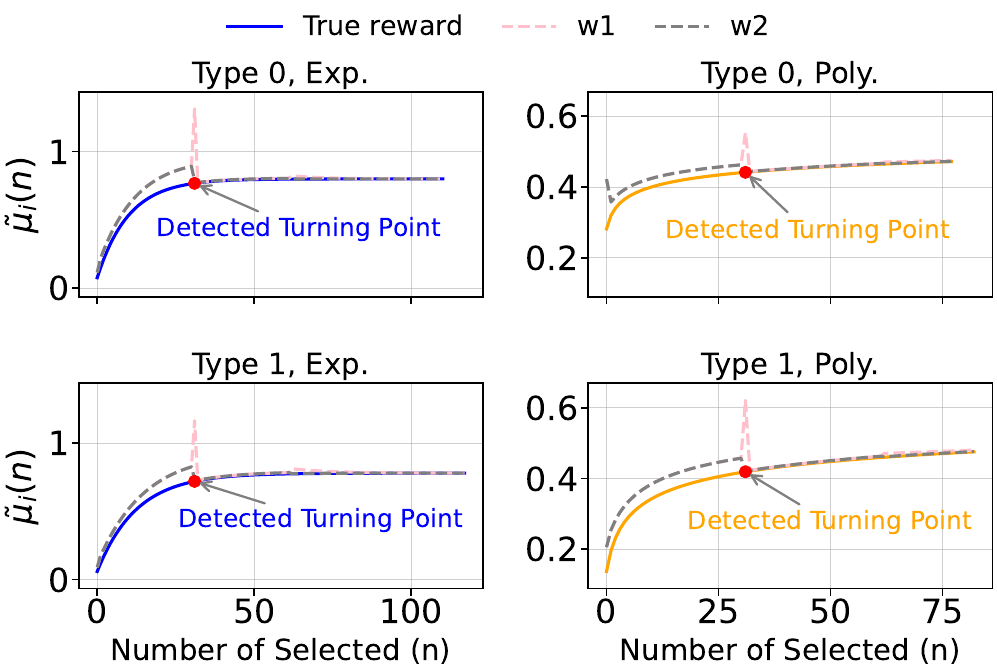}
	\caption{Type-Specific Reward Trajectories in the 2-Arm Setting} \label{fig:two_task_two_arm_reward_fun}
\end{figure}%

\begin{figure}[tb]
	\centering
	\includegraphics[width=0.9\linewidth]{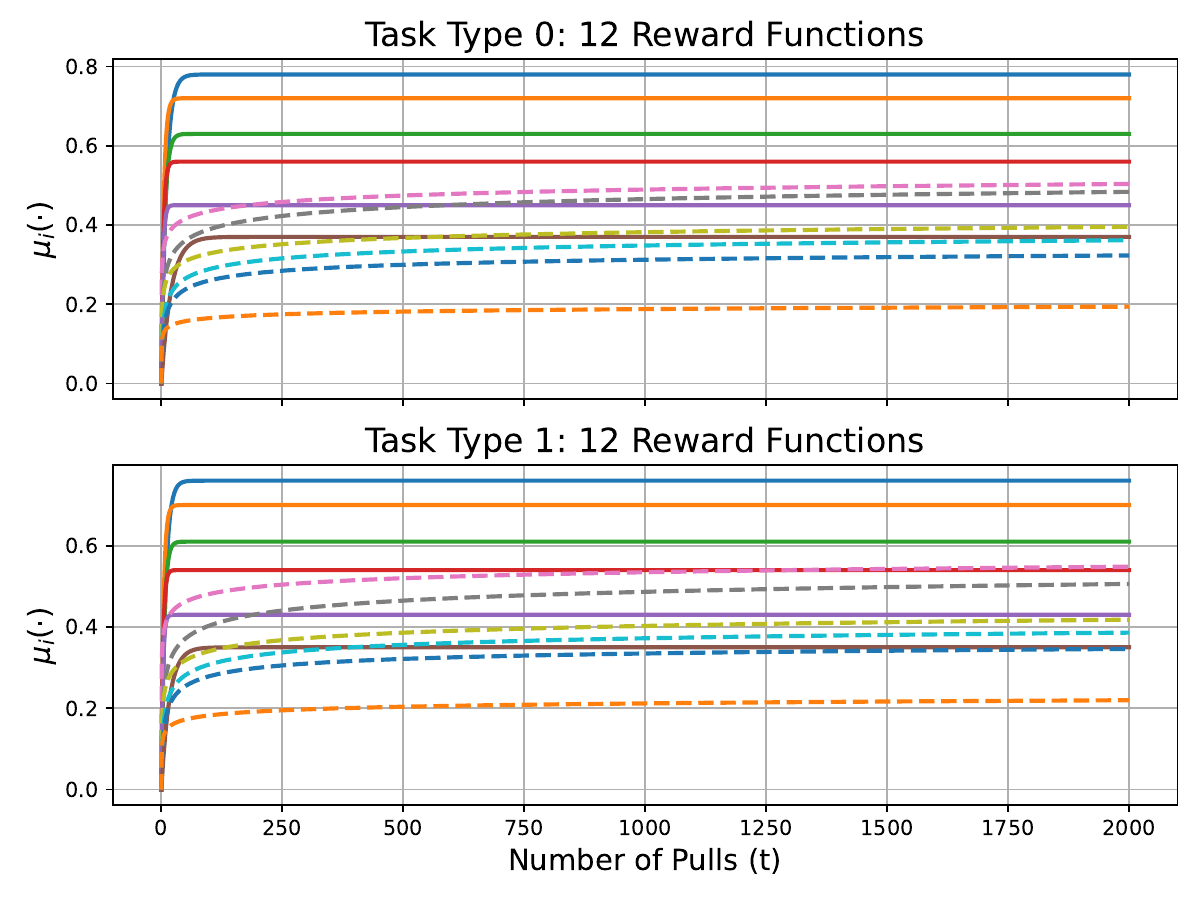}
	\caption{Type-Specific Reward Trajectories in the 12-Arm Setting} \label{fig:two_task_12_arm_reward_fun}
\end{figure}%

\begin{figure}[th]
    \centering
    \subfigure[2-Arm Setting]{
        \begin{minipage}[b]{0.49\linewidth}
        \includegraphics[width=0.99\textwidth]{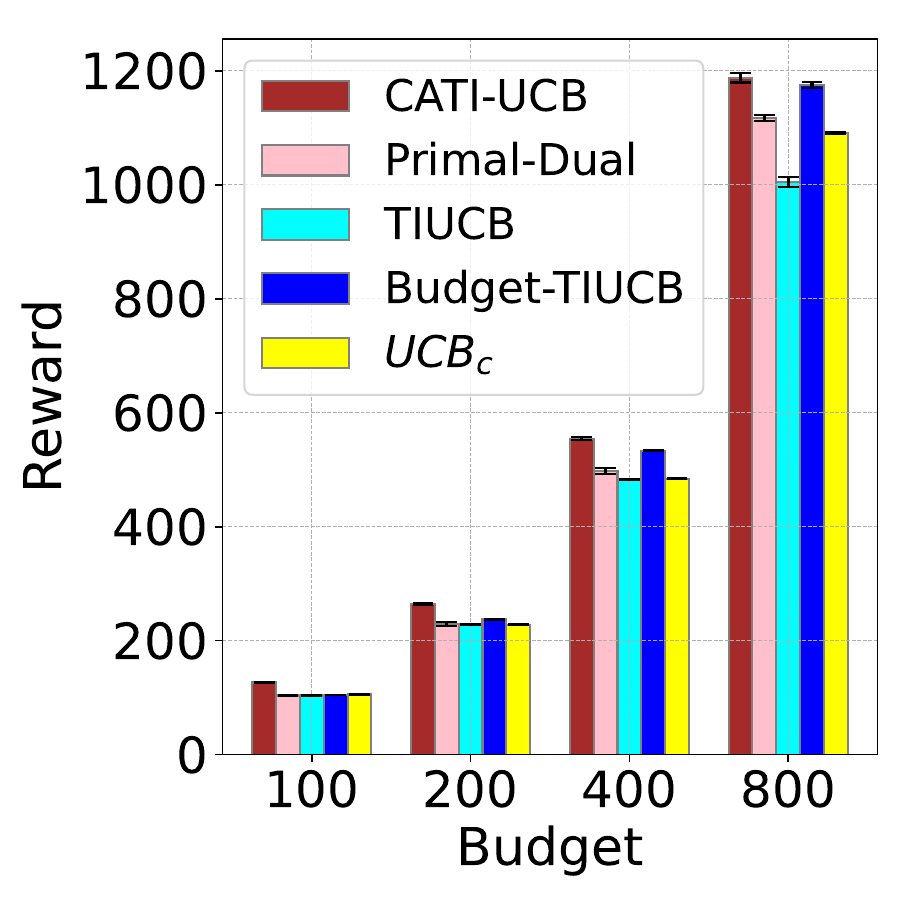}
        \end{minipage}
    \label{fig:two_type_2_arm_reward}
    }
    \hspace{-6mm}
    \subfigure[12-Arm Setting]{
        \begin{minipage}[b]{0.49\linewidth}            \includegraphics[width=0.99\textwidth]{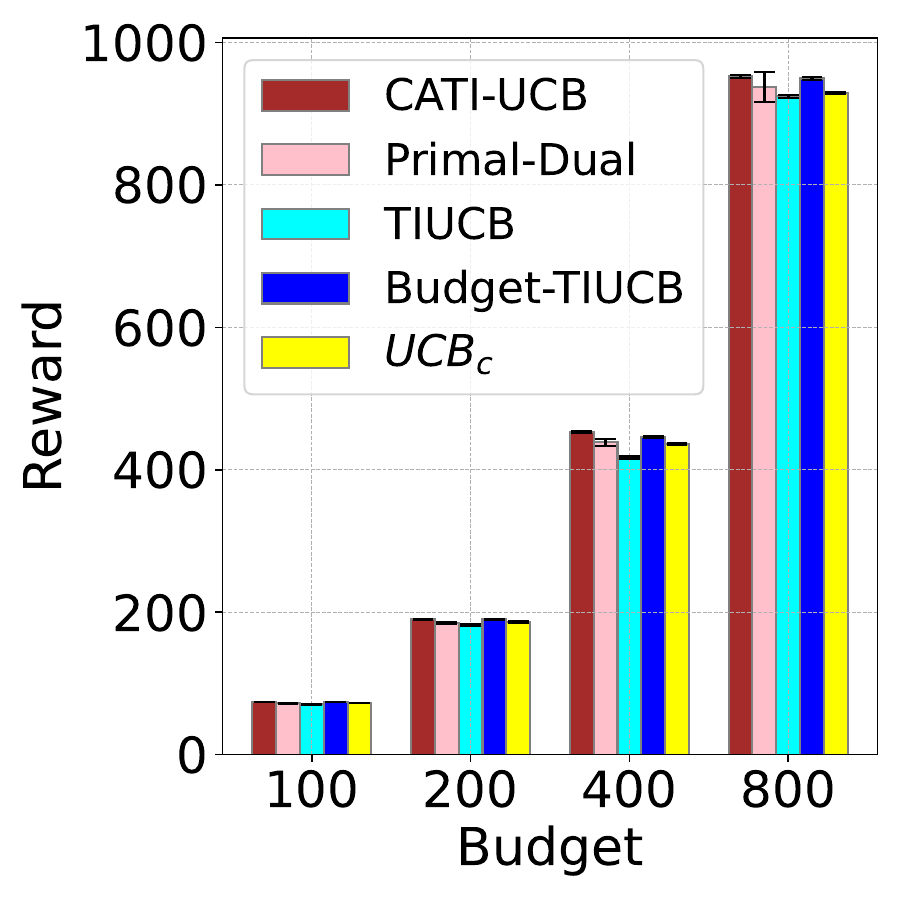}
        \end{minipage}
        \label{fig:two_type_12_arm_reward}
    }
    \caption{Cumulative Reward in the Heterogeneous Task-Types Setting}
    \label{fig:Performance of heterogeneous-task}
\end{figure}

\subsection{Trace-Driven Evaluation Based on a Real-World Dataset}

\begin{figure}[th]
    \centering
    \subfigure[Empirical Reward Trajectories]{
        \begin{minipage}[b]{0.49\linewidth}
        \includegraphics[width=0.99\textwidth]{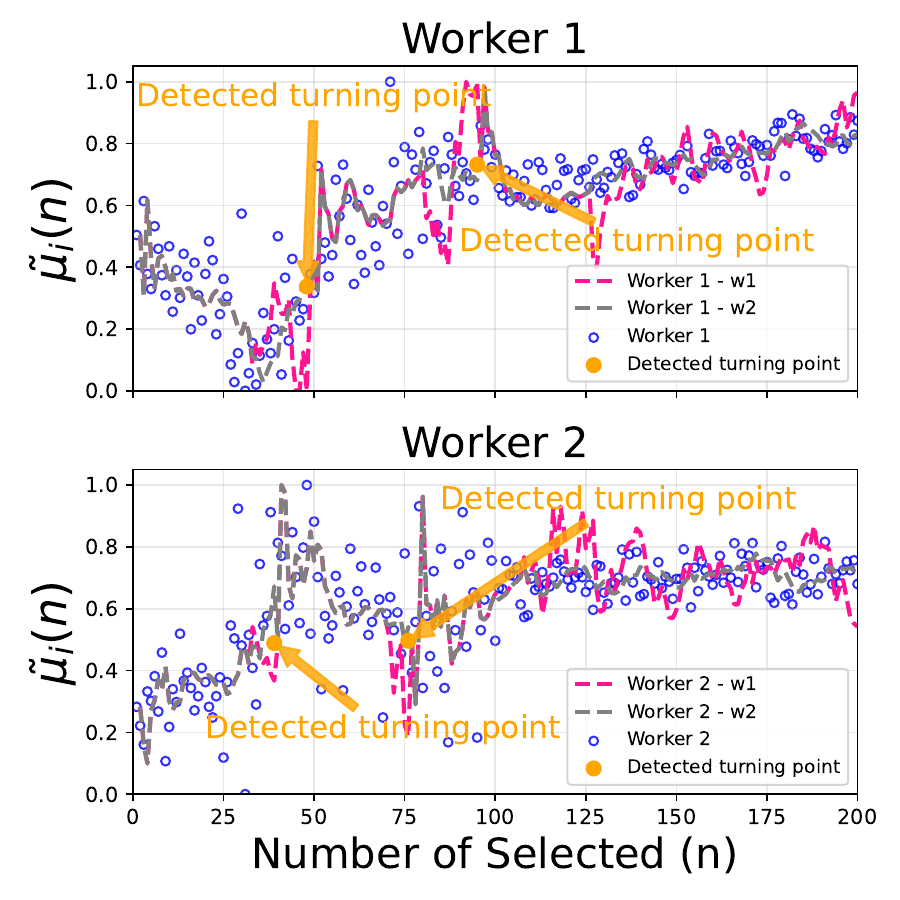}
        \end{minipage}
    \label{fig:real_world_dataset}
    }
    \hspace{-6mm}
    \subfigure[Cumulative Regret]{
        \begin{minipage}[b]{0.49\linewidth}            \includegraphics[width=0.99\textwidth]{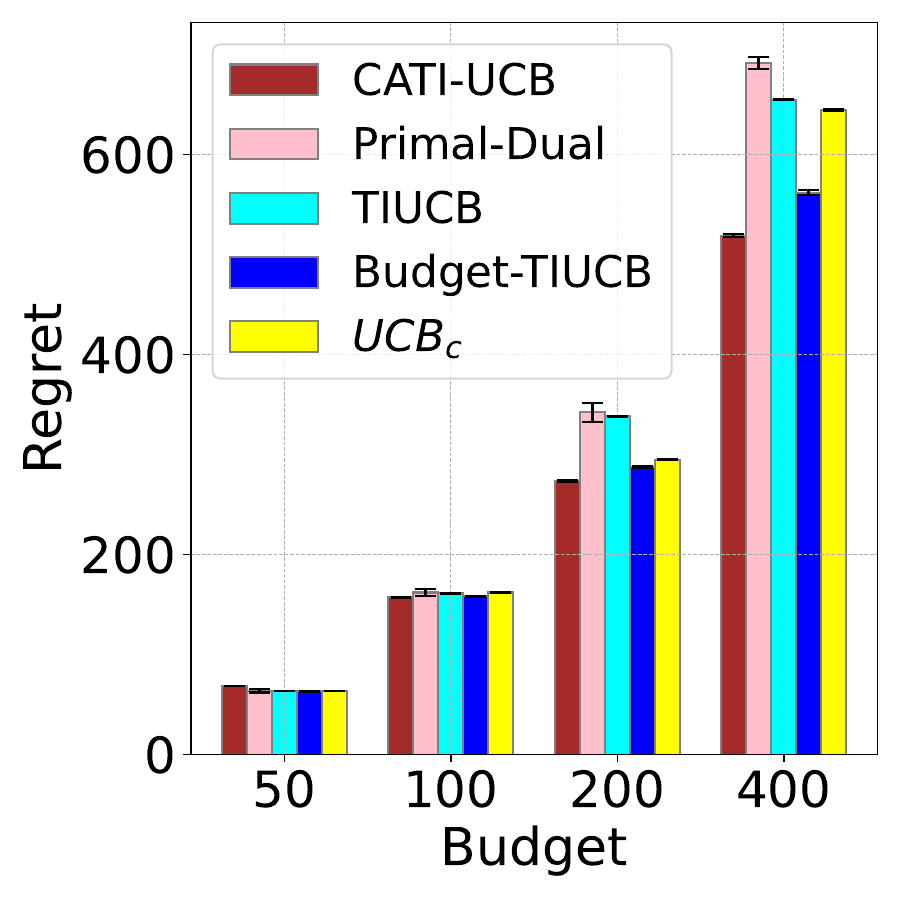}
        \end{minipage}
        \label{fig:real_dataset_regret}
    }
    \caption{Trace-Driven Evaluation Based on the Topcoder Dataset}
    \label{fig:Performance of 2-Arm}
\end{figure}

To complement the synthetic experiments, we construct a trace-driven benchmark based on a real-world Topcoder dataset~\cite{wang2017recommending}.
Since the original raw records are not publicly available, we follow the skill-improvement pipeline in~\cite{wang2017recommending} and build empirical reward trajectories from the published performance trends.
Specifically, we use the reported worker-performance curves to fit representative reward traces, and then sort each worker's scores chronologically to form an empirical reward trajectory.

Fig.~7(a) shows representative empirical reward trajectories for two workers.
The blue circles denote empirical reward observations, while the pink and gray dashed curves denote least-squares prediction trajectories under two detection windows.
The orange dots mark the turning points detected by CATI-UCB.
Compared with the synthetic setting, these real-data-driven traces are substantially noisier and may contain multiple local turning points, which makes convergence detection more challenging.

Fig.~7(b) reports the corresponding budgeted online-selection result.
CATI-UCB achieves the smallest regret for moderate and large budgets, indicating that the proposed framework remains effective when the reward process is derived from real worker-performance data.
At the smallest budget, the empirical reward sequence is still highly unstable and the detector has limited history, so CATI-UCB is slightly less favorable than some baselines.
As the budget increases and more sequential observations become available, CATI-UCB becomes consistently superior.
These trace-driven experiments thus provide direct evidence that CATI-UCB is not limited to purely synthetic environments.

\section{Conclusion}
In this paper, we studied budget-constrained task allocation under structured worker learning, where each worker's reward follows an unknown, increasing-then-converging trajectory with fixed costs. To address this, we proposed CATI-UCB, a cost-aware algorithm that estimates reward and cost, detects learning convergence via change-point detection, and adaptively allocates budget. We provide theoretical guarantees and show that CATI-UCB achieves sublinear regret and outperforms baselines across synthetic and real-world settings. An important direction for future work is to characterize the
regret dependence on the number and arrival frequencies of task
types, and to improve statistical scalability through parameter
sharing or contextual generalization across related task types.

\bibliographystyle{IEEEtran}
\bibliography{references}

\end{document}